\documentclass{article} 
\usepackage{iclr2024_conference,times}


\usepackage{caption}
\usepackage{algorithm}
\usepackage[noend]{algpseudocode}
\usepackage{pifont}


\usepackage{wrapfig}

\usepackage{amsmath,amsfonts,bm}
\usepackage{amssymb}
\usepackage{cases}








\def\eqref#1{Eq.~(\ref{#1})}









\def\1{\bm{1}}










\DeclareMathAlphabet{\mathsfit}{\encodingdefault}{\sfdefault}{m}{sl}
\SetMathAlphabet{\mathsfit}{bold}{\encodingdefault}{\sfdefault}{bx}{n}













\DeclareMathOperator*{\argmax}{arg\,max}


\newcommand{\BlackBox}{\rule{1.5ex}{1.5ex}}  
\newenvironment{proof}{\par\noindent{\bf Proof\ }}{\hfill\BlackBox\\[2mm]}
 
\newtheorem{theorem}{Theorem}
\newtheorem{lemma}[theorem]{Lemma} 
\newtheorem{proposition}[theorem]{Proposition}

\newtheorem{definition}[theorem]{Definition}


\def \bx {\mathbf{x}}

\def \cA {\mathcal{A}}
\def \cI {\mathcal{I}}

\def \bR {\mathbb{R}}

\def \bbE {\mathbf{E}}

\usepackage{hyperref}
\usepackage{url}
\usepackage{subfigure}
\usepackage{graphicx}
\usepackage{comment}
\usepackage{booktabs}
\usepackage{multirow}

\newcommand{\model}{\textsc{Truth-FedBan}}

\title{Incentivized Truthful Communication for Federated Bandits}


\author{
    Zhepei Wei$^{\dagger}$\thanks{Equal Contribution} ~~
    Chuanhao Li$^{\dagger*}$ ~~
    Tianze Ren$^{\dagger}$ ~~
    Haifeng Xu$^\ddagger$ ~~
    Hongning Wang$^\dagger$ \\
    $^\dagger$University of Virginia  ~~~~~~
    $^\ddagger$University of Chicago \\
      \texttt{\{tqf5qb, cl5ev, tr2bx, hw5x\}@virginia.edu} ~
\texttt{haifengxu@chicago.edu} 
}

%

\iclrfinalcopy 
\begin{document}

\maketitle

\begin{abstract}
To enhance the efficiency and practicality of federated bandit learning, recent advances have introduced incentives to motivate communication among clients, where a client participates only when the incentive offered by the server outweighs its participation cost.
However, existing incentive mechanisms naively assume the clients are truthful: they all report their true cost and thus the higher cost one participating client claims, the more the server has to pay.
Therefore, such mechanisms are vulnerable to strategic clients aiming to optimize their own utility by misreporting.
To address this issue, we propose an incentive compatible (i.e., truthful) communication protocol, named~\model{}, where the incentive for each participant is independent of its self-reported cost, and reporting the true cost is the only way to achieve the best utility.
More importantly,~\model{} still guarantees the sub-linear regret and communication cost without any overheads.
In other words, the core conceptual contribution of this paper is, for the first time, demonstrating the possibility of simultaneously achieving incentive compatibility and nearly optimal regret in federated bandit learning.
Extensive numerical studies further validate the effectiveness of our proposed solution.
\end{abstract}

\section{Introduction}

Bandit learning~\citep{lattimore2020bandit} addresses the exploration-exploitation dilemma in interactive environments, where the learner repeatedly chooses actions and observes the corresponding rewards from the environment.
Subject to different goals of the learner, e.g., maximizing cumulative rewards~\citep{abbasi2011improved, auer2002finite} vs., identifying the best arm \citep{audibert2010best,garivier2016optimal}, bandit algorithms have been widely applied in various real-world applications, such as model selection~\citep{maron1993hoeffding}, recommender systems~\citep{li2010contextual, li2010exploitation}, and clinical trials~\citep{durand2018contextual}. 
Most recently, propelled by the increasing scales of data across various sources and public concerns about data privacy, there has been  growing research effort devoted to federated bandit learning, which enables collective bandit learning among distributed learners while preserving the data privacy of each learner.
Recent advances in this line of research mainly focus on addressing the communication bottleneck in the federated network, which leads to communication-efficient protocols for both non-contextual~\citep{landgren2016distributed, martinez2019decentralized, shi2020decentralized, zhu2021federated} and contextual bandits~\citep{wang2020distributed, huang2021federated, li2022kernel, li2023learning} under various environment settings.

However, almost all previous works assume clients are altruistic in sharing their local data with the server whenever communication is triggered~\citep{wang2020distributed, li2022asynchronous, he2022simple}.
This limits their practical deployment in real-world scenarios involving \emph{individual rational} clients who share data only if provided with clear benefits.
The only notable exception is~\citet{wei2023incentivized}, where incentive  is provided to motivate client's participation in federated learning.
Nevertheless, their protocol naively assumes the clients are truthful in reporting their participation cost; and thus, they simply calculate incentives by each client's claimed cost, leaving it as a design flaw for strategic clients to exploit.
Therefore, how to design an \emph{incentive compatible} mechanism for federated bandits that ensures truthful reporting while still preserving the near-optimal regret and communication cost still remains an open research problem.

Following~\cite{wei2023incentivized}'s setting for learning contextual linear bandits in a federated environment, 
we develop the first incentive compatible communication protocol \model{}, which ensures the clients can only achieve their best utility by reporting the true participation costs. 
Specifically, instead of simply paying a client by its claimed cost, we decouple the calculation of incentive from the target client's reported cost, while preserving individual rationality through a \emph{critical-value} based payment design that depends on all other clients' report cost. 
Besides the theoretical guarantee on truthfulness, we also empirically demonstrate that misreporting cost brings no benefit to the client's utility.
More encouragingly, we prove that this can be achieved without any compromise in maintaining the near-optimal performance in regret and communication cost.

On the other hand, in addition to the above desiderata, maintaining a minimal \emph{social cost} is also an important objective in the incentivized communication problem, especially in practical applications. Following classical economic literature~\citep{procaccia2013approximate}, social cost is defined as the sum of true participation costs among all participating clients. 
While incentivizing all clients' participation ensures nearly optimal performance~\citep{wang2020distributed}, it can be scientifically trivial (e.g., paying everyone to have all of them participate) and practically undesirable --- it not only brings unnecessary burden for the server, but can also expose unnecessary clients to potential downsides of participation (e.g., privacy breaches, added resource consumption, etc.), resulting in worse social cost.
Minimizing social cost while ensuring sufficient client participation is non-trivial, as it in nature is NP-hard (see \eqref{eq:opti_problem}).
Though the method proposed  by~\citet{wei2023incentivized} achieves sub-linear regret and communication cost (albeit assuming truthfulness), it provides no guarantee on the social cost.
In contrast, our proposed~\model{} guarantees both sub-linear regret and near-optimal social cost, with only a constant-factor approximation ratio. 
To better illustrate our contribution, we compare the proposed~\model{} with the most related works in Table~\ref{table: comparison with baseline}.

\begin{table}[t]
    \centering
    \begin{tabular}{cccccc}
        \toprule
        Method & Regret & Communication Cost& \emph{IR} & \emph{IC} & \emph{SC}\\
        \midrule
         \texttt{DisLinUCB} & \multirow{2}{*}{$O(d\sqrt{T} \log T)$} & \multirow{2}{*}{$O(N^{2} d^3\log T)$} & \multirow{2}{*}{\color{red}\ding{56}} & \multirow{2}{*}{\color{red}\ding{56}} & \multirow{2}{*}{\color{red}\ding{56}} \\
         \citep{wang2020distributed} & & &\\
        \hline
         \texttt{Inc-FedUCB} & \multirow{2}{*}{$O(d\sqrt{T} \log T)$} & \multirow{2}{*}{$O(N^{2} d^3\log T)$} & \multirow{2}{*}{\ding{51}} &\multirow{2}{*}{\color{red}\ding{56}} & \multirow{2}{*}{\color{red}\ding{56}}\\
         \citep{wei2023incentivized} & & & &\\
        \hline
         \texttt{Truth-FedBan} & \multirow{2}{*}{$O(d\sqrt{T} \log T)$} & \multirow{2}{*}{$O(N^{2} d^3\log T)$} & \multirow{2}{*}{\ding{51}} & \multirow{2}{*}{\ding{51}} & \multirow{2}{*}{\ding{51}} \\
        (Our Algorithm~\ref{alg:truth_incen_search}) & & & &\\
        \bottomrule
    \end{tabular}
    \caption{Comparison with related works, where \emph{IR}, \emph{IC} and \emph{SC} represent the guarantee of \emph{individual rationality}, \emph{incentive compatibility}, and \emph{social cost near-optimality}, respectively.}
    \label{table: comparison with baseline}
    \vspace{-5mm}
\end{table}

\section{Related Work}

\subsection{Federated Bandit Learning}
Federated bandit learning has been well investigated for sequential decision making in distributed environments.
These studies mainly differ in how they model the clients' and environment characteristics, which can be categorized into 1) \emph{bandit-wise}: problem profile (e.g., context-free~\citep{martinez2019decentralized,shi2021federated,shi2020decentralized} vs. contextual~\citep{wang2020distributed}) and decision set (e.g., fixed~\citep{huang2021federated} vs. time-varying~\citep{li2022glb}), and 2) \emph{system-wise}: client type (e.g., homogeneous~\citep{he2022simple} vs. heterogeneous~\citep{li2022asynchronous}), network type (e.g., peer-to-peer~(P2P)~\citep{dubey2020differentially} vs. star-shaped~\citep{wang2020distributed}), and communication type (e.g., synchronous~\citep{li2022kernel} vs. asynchronous~\citep{li2023learning}).

Most recently, \citet{wei2023incentivized} expand this spectrum by introducing the notion of incentivized communication, where the server has to pay the clients for their participation.
Despite being free from the long-standing assumption about the client's willingness of participation in literature, they still assume truthfulness of clients in cost reporting.
Specifically, their incentive calculation is based on the client's self-reported cost, which leads to serious vulnerability in adversarial scenarios as clients can exploit this flaw, ultimately paralyzing the federated learning system.
This is particularly concerning in real-world applications where self-interested clients are motivated to strategically game the system for increased utilities, i.e., increase the difference between incentives offered by the server and actual participation costs.
Our work aims to address this issue by introducing a truthful incentive mechanism under which clients reporting true costs is in their best interest, while ensuring near-optimal learning performance.

\subsection{Mechanism Design}\label{sec:mechanism_design}
Mechanism design~\citep{nisan1999algorithmic} has been playing a crucial role in the fields of economics, computer science and operation research, with fruitful auction-like real-world applications such as matching markets~\citep{roth1986allocation}, resource allocation~\citep{procaccia2013cake}, online advertisement pricing~\citep{aggarwal2006truthful}.
Typically, the auctioneer (server) aims to sell/purchase one or more entries of a collection to/from multiple bidders (clients), with the objective of maximizing social welfare or minimizing social cost.
The goal of mechanism design is to incentivize clients to truthfully report the values of the entries (i.e., \emph{truthfulness}), while ensuring non-negative utilities if they participate in the mechanism (i.e., \emph{individual rationality}).

The Vickrey-Clarke-Groves (VCG) mechanism~\citep{vickrey1961counterspeculation, clarke1971multipart, groves1973incentives} is probably the most well-known truthful mechanism.
Despite having been well explored in many theoretical studies, VCG is rarely applied in practical applications due to its computational inefficiency.
This is because VCG requires finding an optimal solution to the concerned problems, which is often NP-hard~\citep{archer2001truthful}.
Otherwise, truthfulness cannot be guaranteed when VCG mechanisms are applied to sub-optimal solutions~\citep{lehmann2002truth}. 
To facilitate study on this issue,~\cite{mu2008truthful} identified the key character of a truthful mechanism and reduced the problem to designing a monotone algorithm (see Section~\ref{sec:truth_prelim}).
One notable recent related work is~\citep{kandasamy2023vcg}, where the authors model repeated auctions as a bandit learning problem for the server, with clients being unaware of their values but able to provide bandit feedback based on the server's allocation.
The server's goal is to find allocations that maximize social welfare, while ensuring the clients' truthfulness in their feedback.
In contrast, in our work, clients know their participation costs and are concerned to solve the bandit problem collectively.
The server's goal is to incentivize clients' participation for regret minimization, while ensuring the clients' truthfulness in cost reporting and minimizing social cost.

In terms of problem formulation, our work is closest to the~\emph{hiring-a-team} task in procurement auctions~\citep{talwar2003price, archer2007frugal}, where the server aims to incentivize a set of self-interested clients to jointly perform a task.
One standard assumption in this task is that the environment is monopoly-free, i.e., no single client exists in all feasible sets~\citep{iwasaki2007false}.
The reason is that if a client is essential, it has the bargaining power to ask for infinite incentive.
In this paper, we do not assume a monopoly-free environment, otherwise additional environment assumptions will be needed (e.g., how the context or arms should distribute across clients).
Instead, we are intrigued in studying the origin and impact of the monopoly issue from both theoretical and empirical perspectives.
And we also rigorously prove that we can eliminate the issue via hyper-parameter control in our mechanism (see Lemma~\ref{lem:infinite_incentive}).

\subsection{Mechanism Design in Federated Learning}\label{sec:mechanism_design_in_fed}

On the other hand, there have been growing efforts in investigating mechanism design in the context of \emph{federated learning}~\citep{pei2020survey, tu2022incentive}.
For example,~\citet{karimireddy2022mechanisms} introduced a contract-theory
based incentive mechanism to maximize data sharing while avoiding free-riding clients.
In their design, every client gets different snapshots of the global model with different levels of accuracy as incentive, and truthfully reporting their data sharing costs is the best response under the proposed incentive mechanism.
Therefore, there is no overall performance guarantee and their focus is on investigating the level of accuracy the system can achieve under this truthful incentive mechanism.
~\citet{le2021incentive} also investigated truthful mechanism design in the application scenario of wireless communication, where server’s goal is to maximize the system’s social welfare, with respect to a knapsack upper bound constraint.
In contrast, in our problem the server is obligated to improve the overall performance of the learning system, i.e., obtaining near-optimal regret among all clients.
Furthermore, our optimization problem (defined in ~\eqref{eq:opti_problem_submod}) aims at minimizing the social cost, with respect to a submodular lower bound constraint.
Therefore, despite we share a similar idea of using the monotone participant selection rule and critical-value based payment design to guarantee truthfulness, the underlying fundamental optimization problems are completely different, and consequently their solution cannot be used to solve our problem.
Besides pursuing the truthfulness guarantee in mechanism design under the collaborative/federated setting, the other related line of research focuses on designing incentive mechanisms that ensures fairness among distributed clients~\citep{blum2021one, xu2021gradient, sim2020collaborative, donahue2023fairness}, which is also an important direction, despite being beyond the scope of our work.
To our best knowledge, our work is the first attempt that studies truthful mechanism design for federated bandit learning.

\section{Preliminary: Incentivized Federated Bandits}
\label{sec_prelim}

In this section, we present the incentivized communication problem for federated bandits in general and the existing solution framework under the linear reward assumption \citep{wang2020distributed}.
More precisely, we focus our discussions on the learning objectives, including minimizing regret, communication cost, social cost, and ensuring truthfulness.

Consider a learning system with 1) $N$ distributed \emph{strategic} and \emph{individual rational} clients that repeatedly interact with the environment by taking actions to receive rewards, and 2) a central server responsible for motivating the clients to participate in federated learning via incentives.
As in line with~\citet{wei2023incentivized}, we assume the clients can only communicate with the server, forming a star-shaped communication network.
Specifically, at each time step $t \in [T]$, an arbitrary client $i_t \in [N]$ chooses an arm $\bx_{t} \in \cA_{t}$ from its given arm set $\cA_{t}\subseteq \bR^{d}$. Then, client $i_t$ receives a reward $y_{t}=\bx_{t}^{\top}\theta_{\star} + \eta_{t} \in \bR$, where $\theta_{\star}$ is the unknown parameter shared by all clients and $\eta_{t}$ denotes zero-mean sub-Gaussian noise.
Typically, in the centralized setting of bandit learning, a ridge regression estimator $\hat{\theta}_{t}=V_{g,t}^{-1}b_{g,t}$ is constructed for arm selection based on the sufficient statistics from all $N$ clients at time step $t$, where $V_{g,t}=\sum_{s=1}^{t} \bx_{s} \bx_{s}^{\top}$ and $b_{g,t}=\sum_{s=1}^{t} \bx_{s} y_{s}$.
In contrast, since communication does not occur at every time step $t$ in the federated setting, each client $i$ only has a delayed copy of $V_{g,t}$ and $b_{g,t}$, denoted as $V_{i,t} = V_{g,t_\text{last}} + \Delta V_{i,t}, b_{i,t} = b_{g, t_\text{last}} + \Delta b_{i,t}$, where $V_{g, t_\text{last}},b_{g, t_\text{last}}$ are the aggregated statistics shared by the server in the last communication, and $\Delta V_{i,t}, \Delta b_{i,t}$ are the accumulated local updates that client $i$ has collected from the environment since $t_\text{last}$.

\paragraph{Regret and Communication Cost} One key objective of the learning system is to minimize the (pseudo) regret for all $N$ clients across the entire time horizon $T$, i.e., $R_{T}=\sum_{t=1}^{T} r_{t}$, where $r_{t}=\max_{\bx \in \cA_{t}} \bbE[y|\bx]- \bbE[y_{t}|\bx_{t}]$ is the instantaneous regret of client $i_{t}$ at time step $t$.
Meanwhile, a low communication cost is also desired to keep the efficiency of federated learning, which is measured by the total number of scalars transferred throughout the system up to time $T$. 
Intuitively, more frequent communication leads to lower regret. For example, communicating at every time step recovers the centralized setting, leading to the lowest regret, but with an undesirably high communication cost. Efficient communication protocol design becomes the key to balance regret and communication cost. And using determinant ratio to measure the outdatedness of the sufficient statistics stored on the server side against those on the client side has become the reference solution to control communication in federated linear bandits \citep{wang2020distributed,li2022asynchronous}.   

\paragraph{Incentivized Communication} 
When dealing with individual rational clients, additional treatment is needed to facilitate communication, as it becomes possible that no client participates unless properly incentivized thus leading to terrible regret.
In other words, client $i$ only participates if its utility $u_{i,t} = \cI_{i,t} - D_{i,t}$ is non-negative, where $\cI_{i,t}$ is the server-provided incentive, and $D_{i,t}$ is the client's participation cost.
To address this challenge and maintain near-optimal learning outcome,~\citet{wei2023incentivized} pinpointed the core optimization problem in incentivized communication as follows:
\begin{align}\label{eq:opti_problem}
\min\limits_{S_{t} \in 2^{\widetilde{S}}}\sum\limits_{i \in S_{t}} \widehat{D}_{i,t} \;\;s.t. \;\; \frac{\det(V_{g,t}(S_{t}))}{\det(V_{g,t}(\widetilde{S}))} \geq \beta
\end{align}
where $\widehat{D}_{i,t}$ is client $i$'s reported participation cost, $S_{t}$ is the set of clients selected to participate at time step $t$, $\widetilde{S} = \{1, 2, \cdots, N\}$ is the set of all clients, $\beta$ is specified as an input to the algorithm, and $V_{g,t}(S) = V_{g,t_{\text{last}}} + \sum_{j \in S}\Delta V_{j,t}$. In particular, they assume the clients' reported cost is simply the true cost, i.e., $\widehat{D}_{i,t} = D_{i,t}$.
A heuristic search algorithm is executed to solve the optimization problem whenever the standard communication event~\citep{wang2020distributed} is triggered. 
A detailed description of this communication protocol is provided in Appendix~\ref{appendix:general_incentive_framework}.

Note that \citet{wei2023incentivized}'s  work is limited to a constant cost setting of $D_{i,t} = C_i \cdot \mathbb{I}(\Delta V_{i,t} \neq \mathbf{0})$, which restricts the actual cost $D_{i,t}$ of client $i$ to be independent of time and its local updates $\Delta V_{i,t}$.
In our work, we relax it to $D_{i,t} = f(\Delta V_{i,t})$, where $f$ can be any reasonable data valuation function, and even time-varying\footnote{In fact, our proposed~\model{} works with any realization of the valuation function, as all that matters is that client $i$ has a value $D_{i,t}$ for its data at time step $t$.}.
Moreover, their proposed solution for~\eqref{eq:opti_problem} fails to provide any approximation guarantee on the objective, thus having no guarantee on the social cost.
Below, we provide a formal definition of \emph{truthfulness} and \emph{social cost} employed in this paper.

\begin{definition}[Truthfulness]
An incentive mechanism is truthful (i.e., incentive compatible) if at any time $t$ the utility $u_{i,t}$ of any client $i$ is maximized when it reports its true participation cost, i.e., $\widehat{D}_{i,t} = D_{i,t}$, regardless of the reported costs of the other clients' $\widehat{D}_{-i,t}$.
\end{definition}

\begin{definition}[Social Cost] \label{def:social_cost}
The social cost of the learning system is defined as the total actual costs incurred by all participating clients in the incentivized client set $S_t$, i.e., $\sum_{i \in S_t} D_{i,t}$.
\end{definition}

Note that the social cost defined above is different from the incentive cost studied in \citet{wei2023incentivized}, which is the total payment the server made to all clients.
As truthfulness is assumed in their setting, the payment that the server needs to make to incentivize a client is trivially upper bounded by the client's true cost.
However, in order to ensure truthfulness in our setting, the server needs to overpay the selected clients (compared with their true cost). In the case where there exists monopoly client as introduced in Section~\ref{sec:mechanism_design}, an infinite incentive cost is required.

\section{Methodology} \label{sec:method}

\subsection{Characterization of Truthful Incentive Mechanisms}\label{sec:truth_prelim}

Our idea stems from the seminal result of \cite{mu2008truthful}, who provided a characterization of a truthful incentive mechanism as a combination of a monotone selection rule and a critical value payment scheme, which reduces the problem of designing a truthful mechanism to that of designing a monotone selection rule.
Though it is originally intended for combinatorial auctions in economics, we are the first to extend it to the incentivized communication problem in federated bandit learning, laying the foundations for future work.

\begin{definition}[Monotonicity]
    The selection rule for the set $S_{t}$ is monotone if for any client $i$ and any reported costs of the other clients $\widehat{D}_{-i,t}$, client $i$ will remain selected whenever it reports $\widehat{D}'_{i,t} \leq \widehat{D}_{i,t}$, provided it is incentivized when reporting $\widehat{D}_{i,t}$. 
\end{definition}

Furthermore, according to~\citet{ mu2008truthful}, any monotone selection rule of the incentive mechanism has an associated critical payment scheme, with its definition given below.

\begin{definition}[Critical Payment] \label{def:critical_value}
Let $\mathcal{M}$ be a monotone selection rule of the incentive mechanism and $S_t$ be the set of selected clients, then for any client $i$ and any reported costs of the other clients $\widehat{D}_{-i,t}$, there exists a critical value $c_{i,t}(\mathcal{M}, \widehat{D}_{-i,t}) \in (\mathbb{R}_{+} \cup \infty)$ such that $i \in S_t$, $\forall \widehat{D}_{i,t} < c_{i,t}(\mathcal{M}, \widehat{D}_{-i,t})$, and $i \notin S_t$, $\forall \widehat{D}_{i,t} > c_{i,t}(\mathcal{M}, \widehat{D}_{-i,t})$.
\end{definition}
In this way, we can decouple the incentive $\cI_{i,t}$ for client $i$ from its reported participation cost $\widehat{D}_{i,t}$, and calculate the critical value based only on the other clients' reported costs $\widehat{D}_{-i,t}$. Formally,
\begin{align}
    \cI_{i,t} = c_{i,t}(\mathcal{M}, \widehat{D}_{-i,t}) \cdot \mathbb{I}(i \in S_t)
\end{align}
which is fundamentally different from the incentive design in~\citep{wei2023incentivized} where $\cI_{i,t} = \widehat{D}_{i,t} \cdot \mathbb{I}(i \in S_t) $, as our payment method leaves no room for strategic clients to manipulate the incentive and benefit from misreporting.

\subsection{\model{}: A Truthful Mechanism for Incentivized Communication} \label{sec:truth_fedban}
To balance regret and communication cost, while ensuring truthfulness and minimizing social cost, our proposed incentive mechanism~\model{} inherits the incentivized communication protocol by \citet{wei2023incentivized}, with the distinction in implementing a truthful incentive search.
As stated above, the truthfulness of clients is ensured once we devise a monotone algorithm for client selection, combined with a critical payment scheme.
But straightforward monotone algorithms (e.g., a greedy algorithm ranking clients by their claimed costs) offer no guarantee on social cost.
To address this challenge, we rewrite the original optimization problem in~\eqref{eq:opti_problem} into the following equivalent submodular set cover (SSC) problem, where $g(S)$ is a submodular set function (see Definition~\ref{def:submodularity}).
\begin{align}\label{eq:opti_problem_submod}
\min\limits_{S_t \in 2^{\widetilde{S}}}\sum\limits_{i \in S_t} \widehat{D}_{i,t} \;\;s.t. \;\; g_t(S_t) \geq \log \beta, g_t(S_t) = \log \frac{\det(V_{g,t}(S_t))}{\det(V_{g,t}(\widetilde{S}))}
\end{align}
\begin{figure}[ht]
\vspace{-4mm}
  \centering
  \begin{minipage}{0.41\textwidth}
    \begin{algorithm}[H]
      \caption{Truthful Incentive Search}
      \begin{algorithmic}[1]
        \Require $\beta$, $\epsilon > 0$
        \State $S_t \leftarrow \emptyset$, $\widetilde{S} = \{1, 2, \cdots, N\}$
        \State $b \leftarrow \min_{i \in \widetilde{S}} \widehat{D}_{i,t}$
        
        \While{$g_t(S_t) < (1 - e^{-1}) \log \beta$}
          \State $b \leftarrow (1 + \epsilon) b$
          \State $S_t \leftarrow$ \textsc{Greedy}$(\widetilde{S}, b)$
        \EndWhile
        \State \textbf{return} $S_t$
      \end{algorithmic}
      \label{alg:truth_incen_search}
    \end{algorithm}
  \end{minipage}
  \hspace{0.01\textwidth}
  \begin{minipage}{0.56\textwidth}
    \begin{algorithm}[H]
      \caption{\textsc{Greedy}}
      \begin{algorithmic}[1]
        \Require $\widetilde{S}$, $b$
        \State $S_t \leftarrow \emptyset$
        
        \While{$\sum_{i \in S_t} \widehat{D}_{i,t} < b$}
          \State $u \leftarrow \argmax\limits_{j \in \widetilde{S} \setminus S_t: \widehat{D}_{j,t} + \sum_{i \in S_t} \widehat{D}_{i,t} < b} \frac{g_t(S_t \cup \{j\}) - g_t(S_t)}{\widehat{D}_{j,t}}$
          \State $S_t \leftarrow S_t \cup \{u\}$
        \EndWhile
        \State \textbf{return} $S_t$
      \end{algorithmic}
      \label{alg:greedy}
    \end{algorithm}
  \end{minipage}
\vspace{-2mm}
\end{figure}

Inspired by~\citet{iyer2013submodular}, 
we propose Algorithm~\ref{alg:truth_incen_search} that achieves a constant-factor bi-criteria approximation for both the objective and constraint in the problem defined by \eqref{eq:opti_problem_submod}.
As outlined above, 
we first initialize a minimal budget (Line 2) for social cost and repeatedly increase the budget (Line 4) until the resulting client set found by Algorithm~\ref{alg:greedy} satisfies the specified condition (Line 3).

In Algorithm~\ref{alg:greedy}, for a given budget $b$, we iteratively find the best set of clients from the complete client set until the budget cannot afford more clients.
At each iteration, all the remaining non-selected clients are ranked based on their contribution-to-cost ratio (and hence being greedy).
The algorithm then chooses the client with the highest ratio while ensuring the total cost of all selected clients is within the budget (Line 3 of Algorithm~\ref{alg:greedy}). 
The correctness of our method hinges on the following crucial monotonicity property that we prove. Interestingly, despite the wide use of greedy algorithms in submodular maximization, this monotonicity result is unknown in previous literature to the best of our knowledge. We thus present it as a proposition in case it is of independent interest.

\begin{proposition}[Monotonicity]\label{lem:monotone}
Algorithm~\ref{alg:truth_incen_search} is monotone.
\end{proposition}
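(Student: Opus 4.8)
The plan is to reduce the mechanism's monotonicity to a monotonicity property of the \textsc{Greedy} subroutine, and then lift it through the adaptive budget loop of Algorithm~\ref{alg:truth_incen_search}. Fix a client $i$ and the other reports $\widehat{D}_{-i,t}$, and suppose $i \in S_t$ when $i$ reports $\widehat{D}_{i,t}$; I must show $i \in S_t$ when $i$ reports any $\widehat{D}'_{i,t} \le \widehat{D}_{i,t}$, holding $\widehat{D}_{-i,t}$ fixed. Since $S_t$ is produced by repeatedly calling \textsc{Greedy} with a geometrically increasing budget $b$ until the coverage $g_t(S_t)$ crosses $(1-e^{-1})\log\beta$, I would first analyze \textsc{Greedy} at a \emph{fixed} budget and then track how both the stopping budget and the selection of $i$ respond to lowering $\widehat{D}_{i,t}$.

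The key lemma is that \textsc{Greedy} is cost-monotone at a fixed budget: if $i \in \textsc{Greedy}(\widetilde{S}, b)$ under report $\widehat{D}_{i,t}$, then $i \in \textsc{Greedy}(\widetilde{S}, b)$ under $\widehat{D}'_{i,t} \le \widehat{D}_{i,t}$. I would prove this by coupling the two executions step by step. As long as $i$ has not yet been selected in either run, the current selected set and its accumulated cost are identical, because for every client $j \ne i$ both the ratio $\big(g_t(S\cup\{j\})-g_t(S)\big)/\widehat{D}_{j,t}$ and the affordability test in Line~3 of Algorithm~\ref{alg:greedy} are unaffected by $i$'s report. Lowering $i$'s cost only (weakly) increases $i$'s own ratio and (weakly) relaxes its affordability, leaving every other client's selection criterion untouched; hence at the first step where the two runs disagree, the newly selected client can only be $i$ itself. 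Consequently $i$ is selected no later under $\widehat{D}'_{i,t}$ than under $\widehat{D}_{i,t}$. The argument needs only a fixed tie-breaking rule and the monotonicity of $g_t$ (appending $\Delta V_{j,t}\succeq\mathbf{0}$ to $V_{g,t}(S)$ does not decrease $\det V_{g,t}(S)$, so every marginal gain of $g_t$ is nonnegative). This step is the conceptual heart and, to my knowledge, the novel ingredient.

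To pass from a fixed budget to the adaptive loop, I would first record a complementary \emph{budget}-monotonicity of coverage: for fixed reports, $g_t(\textsc{Greedy}(\widetilde{S}, b))$ is non-decreasing in $b$. Here I couple runs with budgets $b < b'$; at the first disagreement the $b'$-run selects a client $X$ that is affordable only under $b'$, whose ratio is at least that of every client still affordable under $b$. Because the per-iteration greedy ratio is non-increasing (submodularity of $g_t$ forces each remaining marginal gain to shrink once a client is added) and the $b$-run can subsequently spend strictly less than $\widehat{D}_{X,t}$, the total additional coverage the $b$-run can still accrue is strictly below the single marginal gain of $X$ that the $b'$-run banks; monotonicity of $g_t$ then yields $g_t(\textsc{Greedy}(\widetilde{S},b)) \le g_t(\textsc{Greedy}(\widetilde{S},b'))$. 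This makes the stopping rule of Algorithm~\ref{alg:truth_incen_search} well defined (once the threshold is met it stays met) and lets me speak of a terminating budget $b^\star$.

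Finally I would combine the two monotonicities. Given $i \in S_t = \textsc{Greedy}(\widetilde{S}, b^\star)$ under $\widehat{D}_{i,t}$, fixed-budget cost-monotonicity gives $i \in \textsc{Greedy}(\widetilde{S}, b^\star)$ under $\widehat{D}'_{i,t}$, so it suffices to show that the $\widehat{D}'_{i,t}$-run terminates at a budget where $i$ is selected. I expect this reconciliation to be the main obstacle: the budget schedule is itself cost-dependent, since the initial budget $\min_{j}\widehat{D}_{j,t}$ --- and hence the entire geometric grid $\{\min_j \widehat{D}_{j,t}\cdot(1+\epsilon)^k\}$ --- shifts downward once $i$ becomes the cheapest client, and membership of a single client need not be monotone in $b$, so the two runs may traverse different budgets. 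The route I would take is (i) to show that lowering $\widehat{D}_{i,t}$ does not increase the terminating budget, by a cost-analogue of the coverage argument above (cheaper participation can only improve the coverage attainable within any budget, again using that $g_t$ is monotone so that selecting the cheaper $i$ never forfeits coverage available without it), and (ii) to strengthen the fixed-budget lemma so that $i$ stays selected across the whole range of budgets at which the $\widehat{D}'_{i,t}$-run could stop. Establishing (i) and (ii) together --- controlling the selection of $i$ against a moving budget grid and a moving threshold-crossing point --- is the delicate part of the proof.
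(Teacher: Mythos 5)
Your first two steps are correct and are exactly the paper's ingredients. The fixed-budget cost-monotonicity of Algorithm~\ref{alg:greedy} (your coupling argument: the two runs coincide until the first divergence, which can only consist of the cheaper run selecting $i$) is what the paper establishes around \eqref{eq:fact_1}; and your budget-monotonicity of coverage is precisely the paper's Lemma~\ref{lem:greedy_monotone}, proved by the same first-divergence argument --- the extra client picked by the larger-budget run costs more than the smaller-budget run can still spend, so the greedy ratio inequality together with submodularity shows that this single marginal gain strictly exceeds all the coverage the smaller-budget run can still collect.

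The genuine gap is your third step, which you explicitly leave open (``the delicate part of the proof''). You never establish (i) that lowering $\widehat{D}_{i,t}$ cannot push the terminating budget above the original terminating budget $b$, nor (ii) that $i$ survives at whatever budget the new run stops at; and this reconciliation is not a routine detail --- it is the actual content of the paper's proof, the part it calls ``highly non-trivial''. The paper closes it by contradiction: if $i$ were dropped after lowering its report, then, since its ratio on the left-hand side of \eqref{eq:fact_1} only increases, it can be displaced only by a competitor that is affordable under the new terminating budget but not under $b$, which forces the new terminating budget to exceed $b$; but that is impossible, because Lemma~\ref{lem:greedy_monotone} makes the achieved coverage non-decreasing in the budget, the while-loop of Algorithm~\ref{alg:truth_incen_search} halts at the first budget whose greedy solution meets the relaxed threshold, and budget $b$ already meets it. So the route (i)/(ii) you sketch is indeed the right one --- it is essentially the paper's argument --- but as written your proposal is a plan for the hardest part of the proof rather than a proof of it. (Your worry about the geometric budget grid shifting when $i$ becomes the cheapest client is legitimate, and the paper's write-up does not explicitly address it either; its contradiction argument reasons directly about the terminating budget rather than about which grid points get visited.)
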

It is not difficult to show that Algorithm~\ref{alg:greedy} is monotone for a fixed input budget  $b$ --- that is, if client $i$ is selected   under $\widehat{D}_{i,t}$ by Algorithm~\ref{alg:greedy}, it remains selected when it reports any $\widehat{D}'_{i,t} \leq \widehat{D}_{i,t}$.
But it is highly non-trivial to prove monotonicity for Algorithm~\ref{alg:truth_incen_search}. 
This is because decreasing a client's reported cost can cause a different output by Algorithm~\ref{alg:greedy} and, consequently, terminate the search process in Algorithm~\ref{alg:truth_incen_search} at a different budget $b$ with a potentially different selection of participant set ${S_t}$. 
We prove Proposition~\ref{lem:monotone} by showing the resulting objective value $g_t({S_t})$ from Algorithm~\ref{alg:greedy}'s selection of clients is non-decreasing with respect to its input budget $b$. The proof is a bit involving since Algorithm \ref{alg:greedy} is an approximate algorithm and generally outputs sub-optimal solutions. We will have to show that the quality of these sub-optimal solutions --- which can be close to or far away from the exact optimality --- will not degenerate as the budget $b$ increases. 
The proof of the above property, together with the formal proof of Proposition~\ref{lem:monotone}, can be found in Appendix~\ref{proof:monotone}.

\begin{lemma}
    If the selection rule of a truthful mechanism is computable in polynomial time, so is the critical payment scheme \citep{mu2008truthful}. 
\end{lemma}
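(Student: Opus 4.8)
The plan is to show that, given black-box access to a monotone selection rule $\mathcal{M}$ that runs in polynomial time, the critical value $c_{i,t}(\mathcal{M}, \widehat{D}_{-i,t})$ of every client $i$ can be computed in polynomial time; the incentive $\cI_{i,t} = c_{i,t}(\mathcal{M}, \widehat{D}_{-i,t}) \cdot \mathbb{I}(i \in S_t)$ then follows with one additional call to $\mathcal{M}$. The starting point is Definition~\ref{def:critical_value}: for fixed reports $\widehat{D}_{-i,t}$ of the other clients, monotonicity forces the membership indicator $\mathbb{I}(i \in \mathcal{M}(\widehat{D}_{i,t}, \widehat{D}_{-i,t}))$, viewed as a function of $\widehat{D}_{i,t}$, to be a non-increasing step function that equals $1$ for $\widehat{D}_{i,t} < c_{i,t}$ and $0$ for $\widehat{D}_{i,t} > c_{i,t}$. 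Computing $c_{i,t}$ therefore reduces to locating the single point at which this step function drops from $1$ to $0$.

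First I would dispose of the monopoly case flagged in Section~\ref{sec:mechanism_design} and Lemma~\ref{lem:infinite_incentive}: by checking whether the clients $\widetilde{S} \setminus \{i\}$ can already meet the constraint $g_t(\cdot) \geq \log\beta$, one detects in polynomial time whether $i$ is essential, in which case $i$ stays selected for every report and $c_{i,t} = \infty$. Otherwise a feasible set excluding $i$ exists and supplies a finite upper bound $U$ guaranteed to exceed $c_{i,t}$, while $L = 0$ serves as a lower bound at which $i$ is selected whenever it is selectable at all. I would then binary search on the bracket $[L, U]$, at each step invoking $\mathcal{M}$ at the midpoint $m = (L + U)/2$ and moving $L \leftarrow m$ if $i \in \mathcal{M}(m, \widehat{D}_{-i,t})$ and $U \leftarrow m$ otherwise. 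Each iteration costs a single polynomial-time run of the selection rule.

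The remaining step, and the main obstacle, is to certify that this search returns $c_{i,t}$ \emph{exactly} after polynomially many iterations rather than merely approximating it. The key observation is that the output of $\mathcal{M}$ can change only at finitely many breakpoints in $\widehat{D}_{i,t}$, and that in our mechanism each such breakpoint is a rational number determined by the input data --- the other clients' reported costs and the marginal gains $g_t(S \cup \{i\}) - g_t(S)$ that enter the contribution-to-cost comparisons of Algorithm~\ref{alg:greedy} --- and hence has bit-complexity polynomial in the input size. Since $c_{i,t}$ coincides with one of these breakpoints, once binary search narrows $[L, U]$ below the minimum gap between adjacent breakpoints the critical value is pinned down exactly; this precision is attained after a number of iterations polynomial in the input bit-length. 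Combining the monopoly check, the binary search, and the final membership query yields a polynomial-time computation of the critical payment scheme, as claimed.
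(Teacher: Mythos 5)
Your overall strategy is the same as the paper's: detect the monopoly case by rerunning the selection rule on $\widetilde{S}\setminus\{i\}$ (returning $c_{i,t}=\infty$ if infeasible, cf.\ Lemma~\ref{lem:infinite_incentive}), then bracket the critical value between a lower bound and the terminating budget $b$ obtained without client $i$, and binary-search the bracket with one polynomial-time call to the selection rule per iteration. This is precisely what the paper does in Appendix~\ref{sec:imple_detail} with Algorithm~\ref{alg:bisection_search}. Where you diverge is your final paragraph, which claims the search recovers $c_{i,t}$ \emph{exactly} after polynomially many iterations, and that step contains a genuine error.

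Your exactness argument rests on the premise that every breakpoint of the membership function $\widehat{D}_{i,t}\mapsto \mathbb{I}\bigl(i\in\mathcal{M}(\widehat{D}_{i,t},\widehat{D}_{-i,t})\bigr)$ is a rational number of polynomial bit-complexity. That is false for this mechanism: here $g_t(S)=\log\frac{\det(V_{g,t}(S))}{\det(V_{g,t}(\widetilde{S}))}$, so the marginal gains entering the contribution-to-cost comparisons of Algorithm~\ref{alg:greedy} are logarithms of rational quantities, and the thresholds at which client $i$'s selection status flips have the form $\widehat{D}_{j,t}\cdot\frac{g_t(S\cup\{i\})-g_t(S)}{g_t(S\cup\{j\})-g_t(S)}$ --- a rational multiple of a ratio of logarithms, which is irrational in general and has no finite binary representation. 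Consequently there is no ``minimum gap between adjacent breakpoints'' that can be resolved within polynomially many bits, and binary search cannot pin down $c_{i,t}$ exactly in finitely many steps. The paper never claims this: Algorithm~\ref{alg:bisection_search} takes a tolerance $\gamma$ as input and converges to within $\gamma$ in at most $\lceil\log_2(\gamma_0/\gamma)\rceil$ iterations with $\gamma_0=|b|$ \citep{burden2015numerical}, i.e., the critical payment is computable to any desired precision with polynomially many invocations of the selection rule --- the appropriate notion of polynomial-time computability for a real-valued payment, and all that the lemma requires. If you drop the exactness claim (or, equivalently, state the conclusion as ``computable to within any tolerance $\gamma$ in time polynomial in the input size and $\log(1/\gamma)$''), your argument matches the paper's and is correct; note also that even the closed-form critical value \eqref{eq:critical_value} that the paper provides for the vanilla greedy variant involves these same irrational ratios, so it too is ``exact'' only in a real-arithmetic model.
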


Note that in the star-shaped communication network, only the server has the necessary information to calculate the critical value of each client, and we assume the server is committed not to tricking the clients. Due to space limit, we leave the detailed critical payment calculation to Appendix~\ref{sec:imple_detail}.
In particular, as we do not assume a monopoly-free environment, a client's critical value could be infinite at a certain point, as introduced in Section~\ref{sec:mechanism_design}.
Nonetheless, Lemma~\ref{lem:infinite_incentive} shows that this infinite payment issue can be essentially eliminated by hyper-parameter control. The time complexity analysis of Algorithm~\ref{alg:truth_incen_search} can be found in Appendix~\ref{appendix:time_complexity}. 

\begin{lemma}[Elimination of Infinite Critical Value]\label{lem:infinite_incentive}
  With parameter $\beta \leq (1+ tL^2/\lambda d)^{-d}$ in Algorithm~\ref{alg:truth_incen_search},  no client will be essential in any communication round at time step $t$. 
\end{lemma}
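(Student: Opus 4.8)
The plan is to recast ``no client is essential'' as a feasibility statement about complement sets, and then control a single determinant ratio. Recall that a client $i$ is essential exactly when it belongs to every feasible set of \eqref{eq:opti_problem_submod}, so $i$ is \emph{non}-essential as soon as one feasible set avoids it. The first step is to observe that $g_t$ is monotone non-decreasing in $S$: enlarging $S$ augments $V_{g,t}(S)$ by the positive semidefinite block $\Delta V_{j,t}$, which can only increase $\det V_{g,t}(S)$ and hence $g_t(S)$. Consequently the most promising $i$-free candidate is the full complement $\widetilde{S}\setminus\{i\}$, and it suffices to prove $g_t(\widetilde{S}\setminus\{i\}) \ge \log\beta$ for every $i$; this certifies $\widetilde{S}\setminus\{i\}$ as feasible and rules out essentiality of $i$.

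For the second step I would bound the two determinants in $g_t(\widetilde{S}\setminus\{i\}) = \log\det V_{g,t}(\widetilde{S}\setminus\{i\}) - \log\det V_{g,t}(\widetilde{S})$ separately. Since $V_{g,t}(\widetilde{S}\setminus\{i\}) \succeq \lambda \mathbf{I}$ (the ridge regularization plus nonnegative updates), we get $\det V_{g,t}(\widetilde{S}\setminus\{i\}) \ge \lambda^{d}$. For the larger matrix I would apply the AM--GM inequality to its eigenvalues, $\det V_{g,t}(\widetilde{S}) \le \big(\Tr V_{g,t}(\widetilde{S})/d\big)^{d}$, and then bound the trace by counting observations: at most $t$ arm pulls have entered the system by time $t$ and each arm satisfies $\|\bx\|\le L$, so $\Tr V_{g,t}(\widetilde{S}) \le \lambda d + t L^{2}$.

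Combining the two estimates yields $g_t(\widetilde{S}\setminus\{i\}) \ge d\log\lambda - d\log(\lambda + tL^{2}/d) = \log\big[(1+tL^{2}/(\lambda d))^{-d}\big]$, and the hypothesis $\beta \le (1+tL^{2}/(\lambda d))^{-d}$ then gives $g_t(\widetilde{S}\setminus\{i\}) \ge \log\beta$. Since $i$ is arbitrary, every complement set is feasible, so no client lies in all feasible sets, i.e., none is essential. The main obstacle is the determinant-ratio estimate rather than the logic: one must choose the right two-sided bound (the $\lambda\mathbf{I}$ lower bound on the smaller Gram matrix and the trace/AM--GM upper bound on the larger one) and, crucially, account for the total number of observations being at most $t$ across the whole federated system. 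The only conceptual care needed elsewhere is the monotonicity reduction, which lets us replace the quantifier over all feasible sets by the single complement $\widetilde{S}\setminus\{i\}$.
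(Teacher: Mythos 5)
Your proof is correct, and it reaches the paper's threshold by a genuinely different and more elementary route. The paper's own argument invokes the matrix determinant lemma to rewrite $\det(\widetilde{V}_t - \Delta V_{i,t})/\det(\widetilde{V}_t)$ as $1/\det\bigl(I_{\Delta t} + X_n(\widetilde{V}_t - \Delta V_{i,t})^{-1}X_n^\top\bigr)$, bounds the inverse by $\lambda^{-1}I$, converts back to a $d\times d$ determinant via $\det(I_{\Delta t}+\tfrac{1}{\lambda}X_nX_n^\top)=\det(I_d+\tfrac{1}{\lambda}X_n^\top X_n)$, uses $\Delta V_{i,t}\preccurlyeq V_{i,t}$, and finally applies Lemma~\ref{lem:det_trace_ineq} to get $\det(\lambda I + V_{i,t})\le(\lambda+tL^2/d)^d$. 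You instead bound the two determinants in the ratio separately: the numerator from below by $\lambda^d$ via the ridge regularization, and the denominator from above by $(\Tr V_{g,t}(\widetilde{S})/d)^d\le(\lambda+tL^2/d)^d$ via AM--GM plus counting at most $t$ observations of norm at most $L$ --- which is precisely the proof of Lemma~\ref{lem:det_trace_ineq}, so you could simply cite it rather than rederive it. Both chains terminate at the identical bound $(1+tL^2/(\lambda d))^{-d}$, so the lemma follows either way; your route avoids the matrix determinant lemma entirely, and your explicit reduction --- monotonicity of $g_t$ under set inclusion, so that it suffices to certify feasibility of the single complement set $\widetilde{S}\setminus\{i\}$ --- makes precise a step the paper leaves implicit. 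What the paper's route buys is that its intermediate bound $\lambda^d/\det(\lambda I + V_{i,t})$ depends only on client $i$'s local data, so it would yield a tighter, per-client threshold if one replaced the worst-case global $t$ by client $i$'s actual data count; your denominator bound uses the whole system's data from the outset and cannot be sharpened this way. One caveat shared by both proofs: the lower bound $V_{g,t}(\widetilde{S}\setminus\{i\})\succeq\lambda I$ presumes the determinant-ratio constraint is evaluated on regularized matrices (note $V_{g,t}(S)$ as defined below \eqref{eq:opti_problem} has no $\lambda I$ term); the paper's proof makes the same implicit assumption when it asserts $\lambda_{\min}(\widetilde{V}_t-\Delta V_{i,t})\geq\lambda$, so you are consistent with the paper, but it is worth flagging.
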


A detailed proof is provided in Appendix~\ref{proof:infinite_incentive}. Building upon the properties above, we are now ready to state the main incentive guarantee of our \model{} protocol. 

\begin{theorem}\label{thm:truthfulness}
  The incentive mechanism induced by Algorithm~\ref{alg:truth_incen_search} is (a)  \textbf{truthful} in the sense that every  client  achieves the highest utility by reporting its true participation cost; and (b) \textbf{individually rational} in the sense that every client's utility of participating in the mechanism is non-negative. 
\end{theorem}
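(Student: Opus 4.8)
The plan is to reduce Theorem \ref{thm:truthfulness} to the characterization of \citet{mu2008truthful}: a mechanism that pairs a monotone selection rule with its associated critical-value payment is both truthful and individually rational. Proposition \ref{lem:monotone} already establishes that Algorithm \ref{alg:truth_incen_search} induces a monotone selection rule $\mathcal{M}$, and Definition \ref{def:critical_value} supplies the critical value $c_{i,t} = c_{i,t}(\mathcal{M}, \widehat{D}_{-i,t})$, which by construction depends only on the other clients' reports $\widehat{D}_{-i,t}$ and not on $\widehat{D}_{i,t}$. Since the incentive is set to $\cI_{i,t} = c_{i,t} \cdot \mathbb{I}(i \in S_t)$, the amount a selected client is paid is fixed independently of its own report. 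With these facts in hand, I would derive the theorem from a standard single-parameter argument, carried out as a case analysis on the position of client $i$'s true cost $D_{i,t}$ relative to its critical value.

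For part (a), I would fix client $i$ and the other clients' reports $\widehat{D}_{-i,t}$, and recall $u_{i,t} = \cI_{i,t} - D_{i,t}$. By Definition \ref{def:critical_value}, reporting $\widehat{D}_{i,t} < c_{i,t}$ yields $i \in S_t$ and utility $c_{i,t} - D_{i,t}$, while reporting $\widehat{D}_{i,t} > c_{i,t}$ yields $i \notin S_t$ and utility $0$. If the true cost satisfies $D_{i,t} < c_{i,t}$, the attainable utilities are $c_{i,t} - D_{i,t} > 0$ (when selected) or $0$ (when not), so being selected is optimal and truthful reporting achieves selection, hence the maximum. If $D_{i,t} > c_{i,t}$, the selected utility $c_{i,t} - D_{i,t} < 0$ is strictly worse than the unselected utility $0$, so not being selected is optimal, which truthful reporting again achieves; the degenerate tie $D_{i,t} = c_{i,t}$ gives utility $0$ either way. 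In every case truthful reporting $\widehat{D}_{i,t} = D_{i,t}$ maximizes $u_{i,t}$, and crucially no misreport can improve it, because the payment when selected is the fixed value $c_{i,t}$ rather than the client's own report, closing the manipulation channel exploited in \citet{wei2023incentivized}.

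For part (b), I would note that individual rationality only needs to be checked for participating clients, since any client with $i \notin S_t$ has $u_{i,t} = 0 \geq 0$. A truthful client that is selected has, by the critical-value property, $D_{i,t} = \widehat{D}_{i,t} \leq c_{i,t}$, so its utility $u_{i,t} = c_{i,t} - D_{i,t} \geq 0$. This establishes non-negative utility for all clients under truthful reporting.

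The heavy lifting --- monotonicity of the approximate greedy search --- is already discharged by Proposition \ref{lem:monotone}, so the remaining work is mostly bookkeeping. The one point requiring care, and which I expect to be the main (though modest) obstacle, is the boundary behavior at $\widehat{D}_{i,t} = c_{i,t}$ together with the possibility that $c_{i,t} = \infty$ for an essential (monopoly) client. I would handle this by observing that the case analysis remains valid when $c_{i,t} = \infty$ --- a client with finite cost is then always selected and enjoys non-negative (indeed unbounded) utility regardless of its report, so both truthfulness and IR hold vacuously --- and by invoking Lemma \ref{lem:infinite_incentive}, which guarantees that such infinite critical values can be avoided entirely under the stated choice of $\beta$, keeping incentives finite in the regime of interest.
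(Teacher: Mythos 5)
Your proposal is correct and follows essentially the same route as the paper's own proof: both reduce truthfulness to the monotonicity established in Proposition~\ref{lem:monotone} combined with the critical-value payment $\cI_{i,t} = c_{i,t}\cdot\mathbb{I}(i \in S_t)$, then verify truthfulness by a case analysis (yours organized by the position of $D_{i,t}$ relative to $c_{i,t}$, the paper's by the four selection outcomes under truth versus misreport --- an equivalent bookkeeping choice) and conclude individual rationality from $D_{i,t} \leq c_{i,t}$ for selected truthful clients. Your explicit treatment of the tie $D_{i,t} = c_{i,t}$ and the infinite critical value via Lemma~\ref{lem:infinite_incentive} is a minor extra care the paper handles only in the surrounding discussion, but it does not change the substance of the argument.
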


\begin{proof}
The truthfulness guarantee directly follows  Lemma~\ref{lem:monotone}. Below, we further elaborate on the impact of misreporting.
Denote $S_t$ and $S'_t$ as the participant sets when client $i$ truthfully reports and misreports its data sharing cost as $\widehat{D}_{i,t} = D_{i,t}$ and $\widehat{D}'_{i,t} \neq D_{i,t}$, respectively. 
Let $c_{i,t}$ be the critical value of client $i$, and $u_{i,t}$ and $u'_{i,t}$ be its utilities in the above two conditions respectively.
According to Definition~\ref{def:critical_value}, we have $i \in S_t$ whenever $\widehat{D}_{i,t} < c_{i,t}$, and $i \notin S'_t$ whenever $\widehat{D}'_{i,t} > c_{i,t}$.
Moreover, if $i \notin S_t$, then $u_{i,t} = 0$.
For simplicity, the subscript $t$ is omitted in the following discussion. Specifically,
there are four possible cases:
1) $i \in S$ and $i \in S'$, as critical payment is independent from the client's reported cost $\widehat{D}_i$ and $\widehat{D}'_i$, therefore $u'_i -  u_i = (c_i - D_i) - (c_i - D_i) = 0$;
2) $i \in S$ and $i \notin S'$, in this case, $\widehat{D}_i = D_i < c_i < \widehat{D}'_i$. Therefore, $u'_i - u_i = 0 - (c_i - D_i) = D_i - c_i < 0$;
3) $i \notin S$ and $i \in S'$, in this case, $\widehat{D}'_i  < c_i < \widehat{D}_i = D_i$. Therefore, $u'_i - u_i = (c_i - D_i) - 0  < 0$;
4) $i \notin S$ and $i \notin S'$, in this case, both utilizes are zero, therefore $u'_i - u_i = 0 - 0 = 0$. 
To conclude, there is no benefit to misreport under our truthful mechanism design in all cases, and only reporting the true data sharing cost can lead to the client's best utility.

We now prove the individual rationality. Given the truthfulness guarantee, each client $i$ reports its true cost $\widehat{D}_{i,t} = D_{i,t}$, and only gets incentivized if $\widehat{D}_{i,t} < c_{i,t}$.
Therefore, the utility of client $i$ is $u_{i,t} = c_{i,t} - D_{i,t} > 0$ if client $i$ gets incentivized; otherwise, $u_{i,t} = 0$.
In either case, client $i$ is ensured to have a non-negative utility, which completes the proof.
\end{proof}

\subsection{Learning Performance of \model{} Protocol} 
The truthfulness property above helps the system induce desirable clients participation behaviors. In this subsection, we demonstrate the learning performance of~\model{} under these client behaviors. Our main results are the following guarantees regarding total social cost that the~\model{} protocol has to suffer and the resultant regret guarantee it induces.
\begin{theorem}[Social Cost]
\label{thm:bicriteria_approx}
For any $\epsilon > 0$, using Algorithm~\ref{alg:greedy} to search for participants in Algorithm~\ref{alg:truth_incen_search} provides a $[1 + \epsilon, 1 - e^{-1}]$
  bi-criteria approximation solution for the problem defined in~\eqref{eq:opti_problem_submod}.
  In other words, to maintain a social cost that is within a $(1+\epsilon)$ factor of the optimal value, it necessitates a relaxation of the constraint by a factor of $(1-e^{-1})$. Formally,
  \begin{align*}
    \sum\limits_{i \in S_t} \widehat{D}_{i,t} \leq (1+\epsilon)\sum\limits_{i \in S_t^\star} \widehat{D}_{i,t} \;\; \text{and}\;\; g_t(S_t) \geq (1 - e^{-1}) \log \beta
\end{align*}
where $S_t$ is the output of Algorithm~\ref{alg:truth_incen_search}, and $S_t^\star$ is the ground-truth optimizer of \eqref{eq:opti_problem_submod}. 
\end{theorem}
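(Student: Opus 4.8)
The plan is to recognize \eqref{eq:opti_problem_submod} as a submodular set cover instance and to follow the iterated-knapsack strategy of \citet{iyer2013submodular}: the inner routine \textsc{Greedy} approximately solves a budgeted submodular \emph{maximization} problem, and the outer loop of Algorithm~\ref{alg:truth_incen_search} performs a geometric search over the budget $b$ until the coverage target is met. The first step is to record the structural facts that make this work: the map $S \mapsto \log\det V_{g,t}(S)$ is monotone non-decreasing and submodular in $S$ (the standard log-determinant-of-PSD-sums property, see Definition~\ref{def:submodularity}), so $g_t$ is a monotone submodular function up to the additive constant $-\log\det V_{g,t}(\widetilde S)$. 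I would therefore carry out the analysis on the normalized gain $h_t(S) := \log\det V_{g,t}(S) - \log\det V_{g,t_{\text{last}}}$, which satisfies $h_t(\emptyset)=0$. With this in hand the coverage half of the claim, $g_t(S_t)\geq (1-e^{-1})\log\beta$, is immediate: Algorithm~\ref{alg:truth_incen_search} exits its while loop only once this inequality holds, so I need only guarantee termination — which follows because $g_t(\textsc{Greedy}(\widetilde S, b))$ is non-decreasing in $b$ (the content established in proving Proposition~\ref{lem:monotone}) and reaches $g_t(\widetilde S)=0\geq (1-e^{-1})\log\beta$ once $b$ exceeds the total reported cost.

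The crux is the cost half, $\sum_{i\in S_t}\widehat{D}_{i,t}\leq (1+\epsilon)\sum_{i\in S_t^\star}\widehat{D}_{i,t}$. For this I would first establish the key inner guarantee: for a fixed budget $b$, the cost-weighted \textsc{Greedy} of Algorithm~\ref{alg:greedy} returns a set $S$ with $h_t(S)\geq (1-e^{-1})\max_{T:\,\sum_{i\in T}\widehat{D}_{i,t}\le b} h_t(T)$. This is the classical analysis of greedy submodular maximization under a knapsack constraint: using submodularity one shows that at each iteration the selected element recovers a $\widehat{D}_{u,t}/b$ fraction of the residual gap to the budget-constrained optimum, and telescoping these multiplicative contractions over the iterations that exhaust the budget yields the $1-e^{-1}$ factor. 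The main obstacle lives exactly here — the cost-weighted greedy attains the tight $1-e^{-1}$ constant only when one accounts correctly for the ``overflow'' element rejected by the budget test in Line~3 of Algorithm~\ref{alg:greedy}, and all the bookkeeping must be done on the normalized $h_t$ rather than on $g_t$ itself, since $g_t(\emptyset)=\log\big(\det V_{g,t_{\text{last}}}/\det V_{g,t}(\widetilde S)\big)<0$ makes a multiplicative bound meaningless for the sign-indefinite objective.

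Granting the inner guarantee, the cost bound follows by a contrapositive argument on the geometric budget ladder. Let $b_{\mathrm{fin}}$ be the budget at the terminating iteration and $b_{\mathrm{fin}}/(1+\epsilon)$ the budget one iteration earlier, at which the loop did \emph{not} terminate, i.e.\ $g_t(\textsc{Greedy}(\widetilde S, b_{\mathrm{fin}}/(1+\epsilon)))<(1-e^{-1})\log\beta$. I would argue this forces no feasible cover to fit inside budget $b_{\mathrm{fin}}/(1+\epsilon)$: if some $T$ with $\sum_{i\in T}\widehat{D}_{i,t}\le b_{\mathrm{fin}}/(1+\epsilon)$ achieved $g_t(T)\ge\log\beta$, then $T$ would witness $\max_{\text{cost}\le b_{\mathrm{fin}}/(1+\epsilon)} h_t \ge \log\beta-g_t(\emptyset)$, and the inner $(1-e^{-1})$ guarantee would push $g_t(\textsc{Greedy})$ up to the exit threshold, contradicting non-termination. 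In particular the optimizer $S_t^\star$ does not fit, so $\sum_{i\in S_t^\star}\widehat{D}_{i,t} > b_{\mathrm{fin}}/(1+\epsilon)$. Since \textsc{Greedy} never lets the accumulated cost reach its budget (Line~2 of Algorithm~\ref{alg:greedy}), the returned $S_t$ obeys $\sum_{i\in S_t}\widehat{D}_{i,t}\le b_{\mathrm{fin}} < (1+\epsilon)\sum_{i\in S_t^\star}\widehat{D}_{i,t}$, giving claim (a). The delicate point to nail down in the write-up is reconciling the normalization constant in this last implication — verifying that the threshold $(1-e^{-1})\log\beta$ hard-coded into Algorithm~\ref{alg:truth_incen_search} is compatible with the $(1-e^{-1})$ factor produced by the normalized inner guarantee, so that the contradiction step goes through without leaking an extra $e^{-1}g_t(\emptyset)$ term.
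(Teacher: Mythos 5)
Your strategy is structurally the same as the paper's proof: read the coverage bound $g_t(S_t)\ge(1-e^{-1})\log\beta$ off the termination test of Algorithm~\ref{alg:truth_incen_search}, and get the cost bound by a contradiction at the penultimate budget $b'=b/(1+\epsilon)$ --- non-termination at $b'$ plus the inner $(1-e^{-1})$ knapsack-greedy guarantee forces every feasible cover, in particular $S_t^\star$, to cost more than $b'$, whence $\sum_{i\in S_t}\widehat D_{i,t}\le b=(1+\epsilon)b'<(1+\epsilon)\sum_{i\in S_t^\star}\widehat D_{i,t}$. The paper does exactly this: it invokes \citet{sviridenko2004note} to assert $g(S_{b'})\ge(1-e^{-1})g(S^\star_{b'})$ (its Eq.~(\ref{eq:greedy_appro})), combines this with $g(S_{b'})<(1-e^{-1})\log\beta$ to conclude $g(S^\star_{b'})<\log\beta$, and derives $\mathrm{OPT}>b'$ by contradiction.

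The problem is that your proposal is incomplete at exactly the point you flag, and the leak you suspect is real, not a bookkeeping detail. The knapsack-greedy guarantee applies to the normalized nonnegative function $h_t=g_t-g_t(\emptyset)$, as you insist. But then, assuming some cover $T$ with $g_t(T)\ge\log\beta$ fits within budget $b'$, translating the guarantee back gives only $g_t(\textsc{Greedy}(\widetilde S,b'))\ge(1-e^{-1})\log\beta+e^{-1}g_t(\emptyset)$, which sits strictly \emph{below} the exit threshold $(1-e^{-1})\log\beta$ because $g_t(\emptyset)<0$; equivalently, non-termination at $b'$ only yields $g_t(S^\star_{b'})<\log\beta+\lvert g_t(\emptyset)\rvert/(e-1)$, not $<\log\beta$. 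Since in any nontrivial instance $g_t(\emptyset)<\log\beta$, the leaked term exceeds $\lvert\log\beta\rvert/(e-1)$ and cannot be absorbed, so no contradiction with non-termination arises and $\mathrm{OPT}>b'$ does not follow. The paper escapes this only by stating the multiplicative bound directly for the sign-indefinite $g_t$ --- precisely the move you (rightly) reject as meaningless, and one that does not follow from \citet{sviridenko2004note}, since for $g_t\le 0$ the inequality $g_t(S_{b'})\ge(1-e^{-1})g_t(S^\star_{b'})$ is strictly stronger than the translated guarantee. So your extra care has exposed a step the paper glosses over rather than produced a proof: to finish along your lines you must either justify the un-normalized inequality for this particular $g_t$, or weaken the theorem (e.g., use the exit threshold $g_t(S_t)\ge(1-e^{-1})\log\beta+e^{-1}g_t(\emptyset)$, under which your contradiction closes cleanly but the coverage guarantee is weaker than claimed). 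A second, smaller looseness you share with the paper: Algorithm~\ref{alg:greedy} is the plain cost-benefit greedy, whereas the $(1-e^{-1})$ ratio in \citet{sviridenko2004note} requires partial enumeration, and ``crediting the overflow element'' bounds the value of the greedy set \emph{plus} the rejected element, not of the set Algorithm~\ref{alg:greedy} actually returns --- so the inner guarantee itself needs a real argument in either version.
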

\begin{proof}
    Denote the optimal objective value of \eqref{eq:opti_problem_submod} as $\text{OPT}$.
    For the solution $S_t^\star$, we have $\text{OPT} = \sum_{i \in S_t^\star} \widehat{D}_{i,t}$ and $g_t(S_t^\star) \geq \log \beta$.
    To simplify out discussions, we omit the subscript $t$ and let $S_b$ and $b$ be the output set and terminating budget of Algorithm~\ref{alg:truth_incen_search} for solving the problem in~\eqref{eq:opti_problem_submod}, and $S_{b'}$ and $b' = b/(1+\epsilon)$ be the set and budget at the previous iteration before termination, then we have
    \begin{numcases}{}
    g(S_{b'}) < (1-e^{-1})\log \beta \label{eq:terminating_constraint_last}\\
    g(S_{b}) \geq (1-e^{-1}) \log \beta \label{eq:terminating_constraint}
    \end{numcases}
Denote $S^\star_{b'}$ as the optimal solution for the subroutine search problem with budget $b'$ (denote the problem solved by Algorithm~\ref{alg:greedy} in Line 5 of Algorithm~\ref{alg:truth_incen_search} as \textsc{SubProblem}). According to~\cite{sviridenko2004note}, the approximation ratio of Algorithm~\ref{alg:greedy} for this \textsc{SubProblem} is $(1-e^{-1})$, i.e.,
\begin{align}
    g(S_{b'}) \geq (1-e^{-1}) g(S^\star_{b'})
    \label{eq:greedy_appro}
\end{align}
Combining \eqref{eq:terminating_constraint_last} and \eqref{eq:greedy_appro}, we have $g(S^\star_{b'}) < \log \beta$. 
Furthermore, we can show that $\text{OPT} > b'$ by contradiction.
Assuming $\text{OPT} \leq b'$, then $S^\star$ is a feasible solution for the \textsc{SubProblem}, and thus $g(S^\star) \leq g(S^\star_{b'}) < \log \beta$.
However, this contradicts the fact that $g(S^\star) \geq \log \beta$, so $\text{OPT} > b'$.
Hence, we can show that the objective value of solution $S_b$ satisfies the following inequality:
\begin{align}\label{eq:approx_ratio}
   \sum\limits_{i \in S_{b}} \widehat{D}_i\leq b = (1+\epsilon) b' < (1+\epsilon) \text{OPT} 
\end{align}
This, combined with~\eqref{eq:terminating_constraint}, concludes the proof.
\end{proof}
Since $\widehat D_{i,t} = D_{i,t}$ is guaranteed (see Theorem~\ref{thm:truthfulness}), Theorem~\ref{thm:bicriteria_approx} directly bounds the social cost as defined in Definition~\ref{def:social_cost}.
Note that as indicated by Theorem~\ref{thm:bicriteria_approx}, we can flexibly choose any desired level of social cost by adjusting the parameter $\epsilon$, which allows us to accommodate various computation resources in practical scenarios.
For example, in the case where computation is not a limiting factor and the core objective is to minimize the social cost, we can set the factor $(1+\epsilon)$ to be almost 1, approaching the optimal social cost.
Moreover, though this bi-criteria approximation slightly deviates from the constraint of the original problem in~\eqref{eq:opti_problem_submod}, it only incurs a constant-factor gap of $(1-e^{-1})$, and Theorem~\ref{thm:regret_and_commu} shows that we still attain near-optimal regret and communication cost, despite this deviation (see proof in Appendix~\ref{proof:regret_and_commu}). 

\begin{theorem}[Regret and Communication Cost]\label{thm:regret_and_commu}
Under threshold $\beta$, with high probability the communication cost of~\model~satisfies 
$C_T = O(Nd^2) \cdot P = O(N^2d^3\log T)$, where $P = O(Nd\log T)$ is the total number of communication rounds, under the communication threshold 
$D_c = \frac{T}{N^2d\log T} - \sqrt{\frac{T^2}{N^2dR\log T} }\log \beta^{(1-e^{-1})}$ in Algorithm~\ref{alg:framework}, where $R = \left\lceil d\log ( 1 + \frac{T}{\lambda d}) \right\rceil$.
Furthermore, by setting $\beta^{(1-e^{-1})} \geq e^{-\frac{1}{N}}$,
the cumulative regret is
$R_T= O\left(d\sqrt{T}\log T\right)$.
\end{theorem}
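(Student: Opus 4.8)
The plan is to extend the determinant-potential analysis of the \texttt{DisLinUCB} protocol \citep{wang2020distributed} to our mechanism, where two features demand new treatment: only the selected set $S_t$ (rather than all $N$ clients) synchronizes at a communication round, and the bi-criteria guarantee of Theorem~\ref{thm:bicriteria_approx} replaces the exact constraint $g_t(S_t)\ge\log\beta$ by the relaxed $g_t(S_t)\ge(1-e^{-1})\log\beta$. I would split the argument into a communication-cost bound and a regret bound, both organized around the global covariance $V_{g,t}$ and its log-determinant potential, and tie them together through the single parameterization given by $\beta$ and $D_c$.

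For the communication cost, I would first note that each round transmits the $d\times d$ matrix $V$ and the $d$-vector $b$ for the involved clients, costing $O(Nd^2)$ scalars, so that $C_T=O(Nd^2)\cdot P$ and it remains to bound the number of rounds $P$. Writing the rounds as $t_1<\cdots<t_P$ and letting $\alpha_p$ be the log-determinant increase during round $p$, the trigger of Algorithm~\ref{alg:framework} ensures $(t_p-t_{p-1})\,\alpha_p\ge D_c$ for each $p$. Taking square roots and summing gives $P\sqrt{D_c}\le\sum_p\sqrt{(t_p-t_{p-1})\,\alpha_p}$, and Cauchy--Schwarz with $\sum_p(t_p-t_{p-1})=T$ and $\sum_p\alpha_p\le R=\lceil d\log(1+T/(\lambda d))\rceil$ (the latter from the determinant--trace inequality) bounds the right side by $\sqrt{TR}$, so that $P\le\sqrt{TR/D_c}$. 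Substituting the stated $D_c$, whose leading term is of order $T/(N^2 d\log T)$, then yields $P=O(Nd\log T)$ and hence $C_T=O(N^2 d^3\log T)$.

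For the regret, I would work on the high-probability event of the self-normalized confidence bound, under which $|\bx^\top(\hat\theta_{i,t}-\theta_\star)|\le\alpha_t\|\bx\|_{V_{i,t}^{-1}}$ with $\alpha_t=O(\sqrt{d\log T})$, so optimism gives $r_t\le 2\alpha_t\|\bx_t\|_{V_{i_t,t}^{-1}}$. Since $V_{i_t,t}\preceq V_{g,t}$, the determinant inequality $\|\bx\|_{A^{-1}}\le\sqrt{\det B/\det A}\,\|\bx\|_{B^{-1}}$ (valid for $A\preceq B$) lets me replace the local width by the centralized one at the cost of the factor $\sqrt{\det V_{g,t}/\det V_{i_t,t}}$. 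Bounding $V_{i_t,t}\succeq V_{g,t_{\text{last}}}$ reduces this to the between-synchronization determinant ratio, which the trigger caps through $D_c$; partial participation contributes only the additional factor $\det V_{g,t}(\widetilde S)/\det V_{g,t}(S_t)\le\beta^{-(1-e^{-1})}\le e^{1/N}$, taken relative to the fully aggregated statistics at the same round so that it does not accumulate. The assumption $\beta^{(1-e^{-1})}\ge e^{-1/N}$ is precisely what keeps the $\beta$-dependent correction in $D_c$ of the same order as its leading term $T/(N^2 d\log T)$, so that the induced delay, and hence the width inflation, stays a constant factor. Finally, the elliptical potential lemma gives $\sum_t\|\bx_t\|_{V_{g,t}^{-1}}^2\le 2R=O(d\log T)$, and Cauchy--Schwarz together with $\alpha_t=O(\sqrt{d\log T})$ yields $R_T=O(d\sqrt T\log T)$.

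The main obstacle, and the true departure from \texttt{DisLinUCB}, is handling delay and partial participation jointly in the width comparison: because the synchronized statistics omit the non-participating clients' updates, the covariance used for acting can be strictly dominated by the fully aggregated one, and I expect the delicate part to be establishing a uniform-in-$t$ determinant-ratio bound that is consistent with both target rates at once. Concretely, the same $\beta$ controls the constraint $g_t(S_t)\ge(1-e^{-1})\log\beta$, the per-round deficit $\beta^{-(1-e^{-1})}$, and (through $\beta^{(1-e^{-1})}\ge e^{-1/N}$) the order of $D_c$; verifying that these choices are mutually compatible --- so that $P=O(Nd\log T)$ and $R_T=O(d\sqrt T\log T)$ hold simultaneously under one parameterization --- is where the bookkeeping is most demanding.
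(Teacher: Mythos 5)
Your high-level strategy --- determinant potentials, Cauchy--Schwarz epoch counting, and a width comparison carrying the partial-participation deficit $\beta^{-(1-e^{-1})}$ --- is the same family of argument the paper uses (its proof is essentially a reduction to Theorem 4 of \citet{wei2023incentivized} under the substitution $\overline{\beta}=\beta^{1-e^{-1}}$), but two of your steps have genuine gaps. For the communication cost, the inequality $(t_p-t_{p-1})\,\alpha_p\ge D_c$ is false in general when $\alpha_p$ is the log-determinant increase of the \emph{synchronized} matrix, which is what you need for $\sum_p \alpha_p \le R$ to telescope. The trigger in Algorithm~\ref{alg:framework} involves the triggering client's \emph{local} drift relative to the last synchronized matrix, and since that client need not be selected into $S_{t_p}$, the synchronized increase can undershoot this local drift by up to the slack $(1-e^{-1})\log(1/\beta)$ permitted by the relaxed constraint in \eqref{eq:opti_problem_submod}. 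The correct per-epoch inequality is $(t_p-t_{p-1})\bigl(\alpha_p+(1-e^{-1})\log(1/\beta)\bigr)>D_c$, which after Cauchy--Schwarz gives a quadratic inequality $P^2 D_c - T P\,(1-e^{-1})\log(1/\beta) - TR \le 0$; the second term of the stated $D_c$ is engineered exactly to absorb the linear-in-$P$ term so that the solution is still $P=O(Nd\log T)$ for any $\beta$. Treating that term as a mere order-of-magnitude correction, as you do, misses its structural role.

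The more serious gap is in the regret bound: you defer to ``a uniform-in-$t$ determinant-ratio bound,'' but no such bound exists, and the trigger does not ``cap the between-synchronization determinant ratio through $D_c$.'' The trigger only caps each client's product of (elapsed time) $\times$ (local log-det drift) at $D_c=\Theta(T/(N^2d\log T))$, which gives no constant-factor control of the epoch-level global ratio $\det(V_{g,t_j})/\det(V_{g,t_{j-1}})$ --- the drifts of up to $N$ clients add up, and in some epochs this ratio is necessarily large. The paper handles this (following \citet{wang2020distributed}) by a good/bad epoch decomposition: epochs whose log-determinant ratio exceeds a constant are at most $O(R)=O(d\log T)$ in number because those ratios telescope into $R$, and their regret is bounded by a separate argument using the per-client cap $D_c$, yielding $REG_{bad}=O\bigl(Nd^{1.5}\sqrt{D_c\log(T/\delta)}\log T\bigr)$; your width-comparison-plus-elliptical-potential argument is valid only on the remaining good epochs, where the inflation is a constant times $\beta^{-(1-e^{-1})/2}\le e^{1/(2N)}$. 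The condition $\beta^{1-e^{-1}}\ge e^{-1/N}$ is then used twice --- to make the good-epoch deficit factor $O(1)$, and to make $\sqrt{D_c}=O\bigl(\sqrt{T/(N^2d\log T)}\bigr)$ so that the bad-epoch total is also $O(d\sqrt{T}\log T)$. Without this decomposition, your single elliptical-potential pass cannot absorb the bad epochs, so the plan as written does not close.
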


\section{Experiments}\label{sec:exp}

To validate our method, we create a simulated federated bandit learning environment with context feature dimension $d = 5$ and $N=25$ clients sequentially interacting with the environment for a fixed time horizon $T$.
Due to space limit, more implementation details can be found in Appendix~\ref{sec:imple_detail}.
The results, averaged over 5 runs, are presented alongside the standard deviation.

\subsection{Comparison between different truthful incentive mechanisms}\label{sec:exp1}

\begin{figure}[!h]
\vspace{-3mm}
\centering    
\subfigure[Reg. \& Commu. Cost]{\label{fig:exp_1_regret_commu}\includegraphics[width=0.245\textwidth]{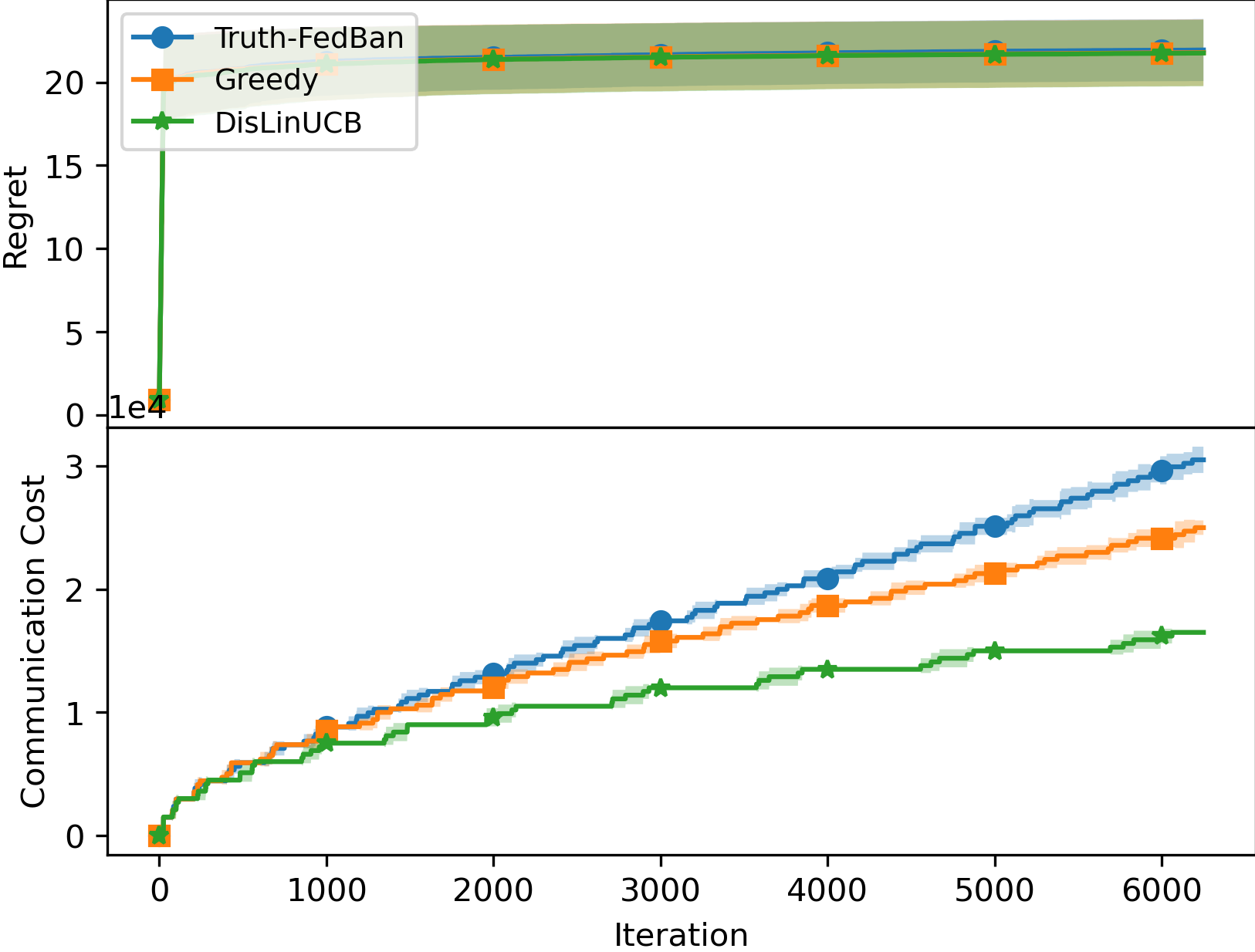}}
\subfigure[Inc. \& Social Cost]{\label{fig:exp_1_incen_social}\includegraphics[width=0.245\textwidth]{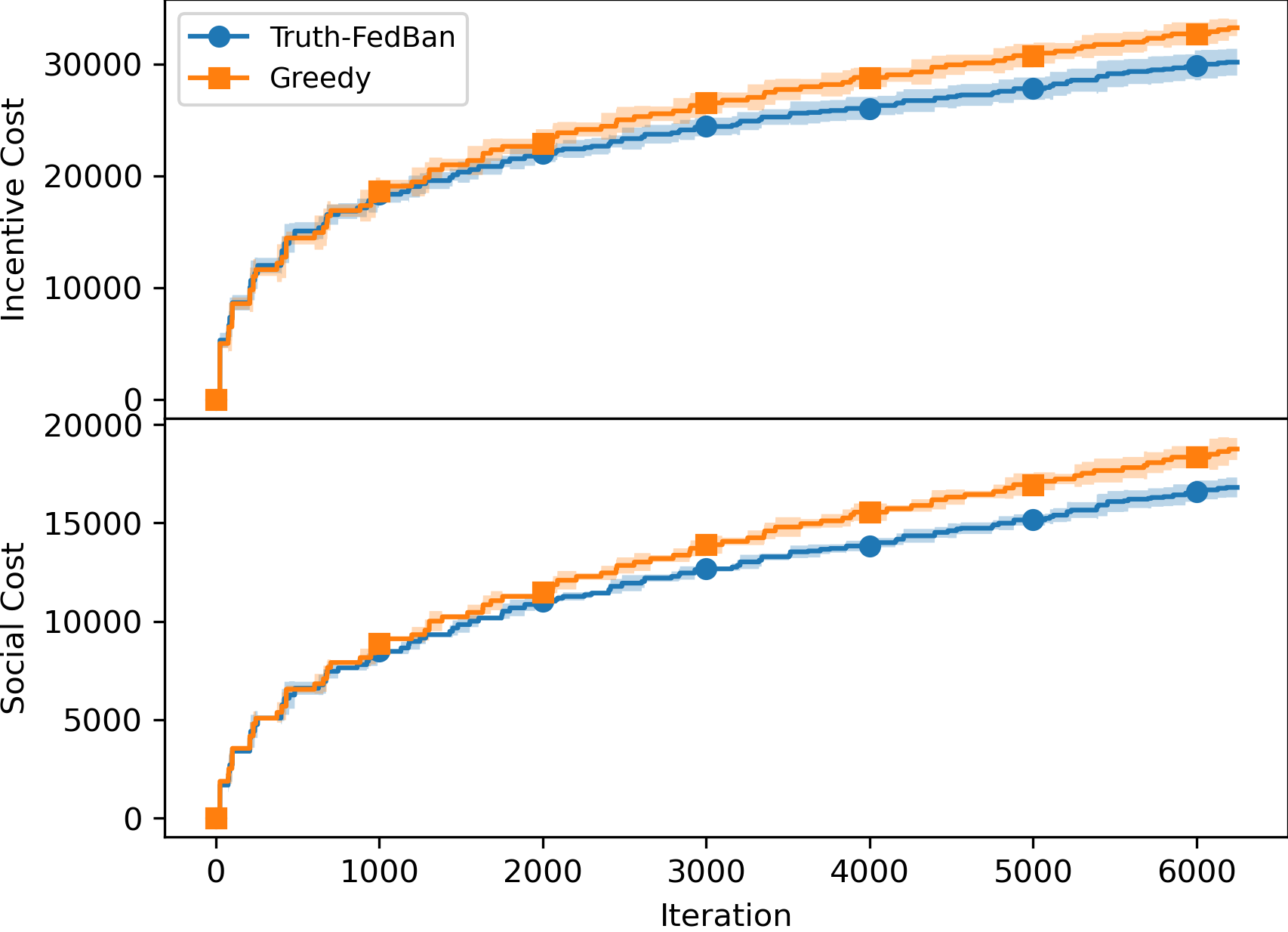}}
\subfigure[~\model{}]{\label{fig:exp_2_micro_fedban}\includegraphics[width=0.245\textwidth]{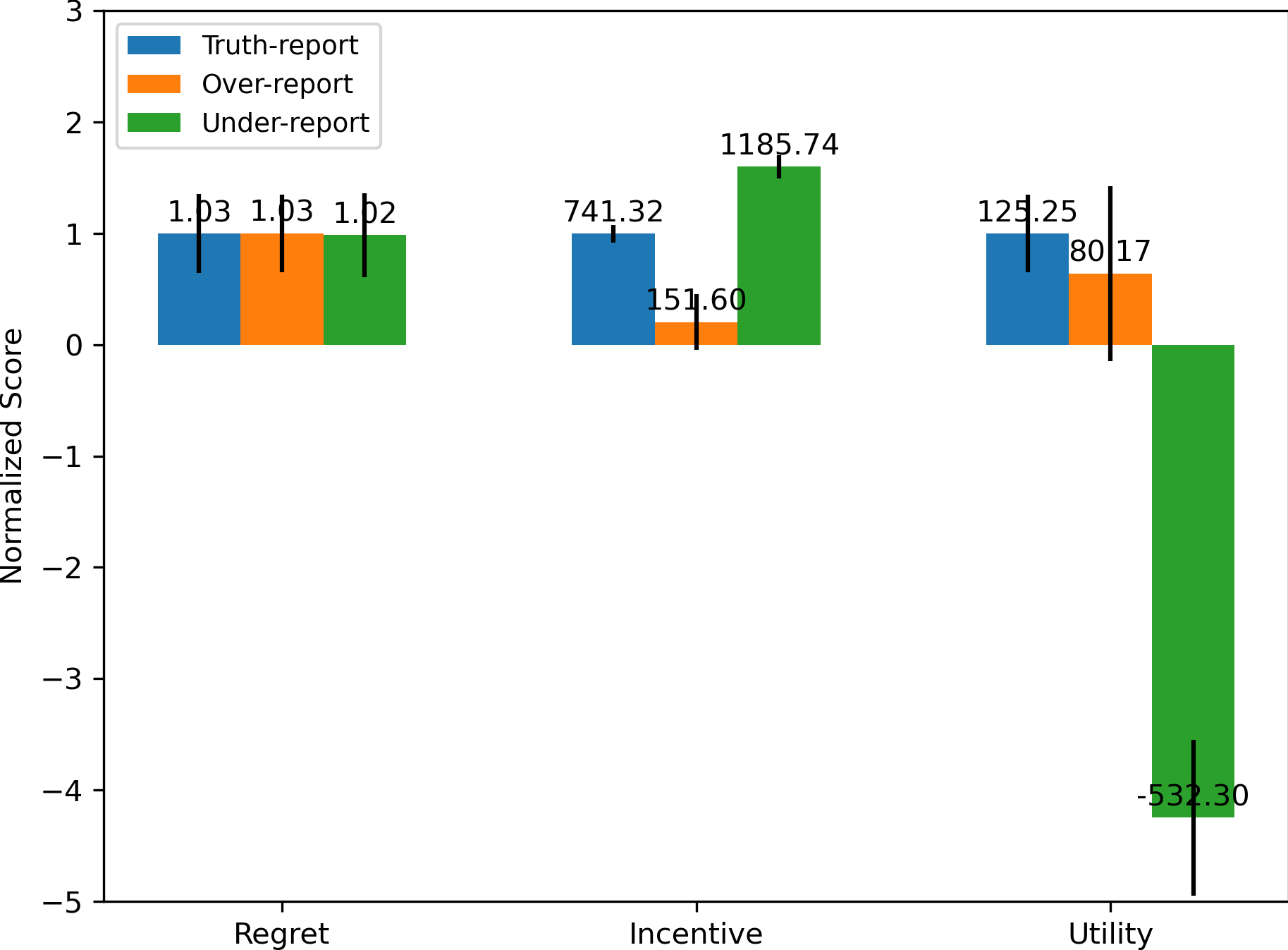}}
\subfigure[Vanilla Greedy]{\label{fig:exp_2_micro_greedy}\includegraphics[width=0.245\textwidth]{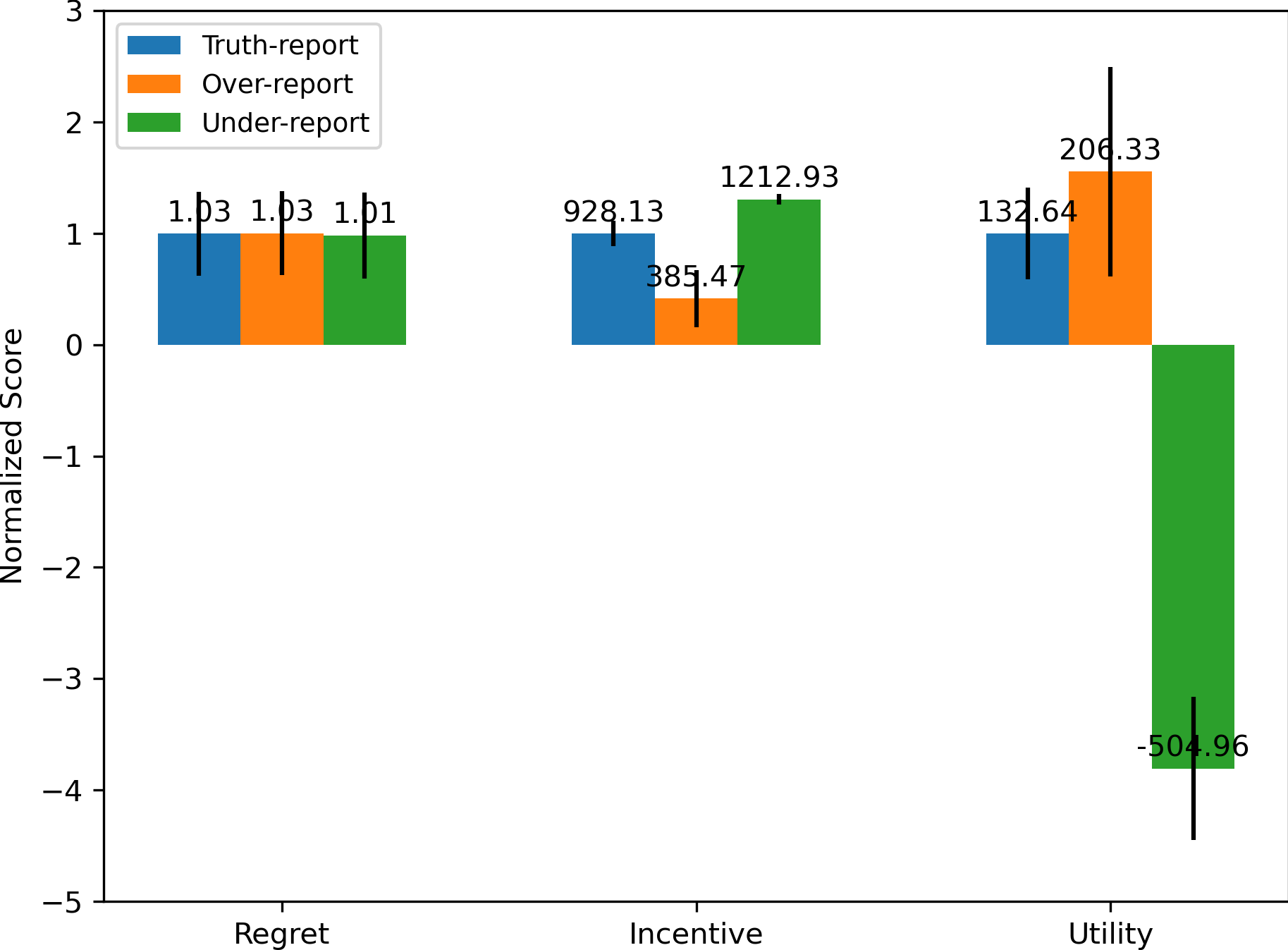}}
\vspace{-4mm}
\caption{Comparison between~\model{} and vanilla greedy incentive mechanism.} \label{fig:exp_compare}
\vspace{-2mm}
\end{figure}

We compare \model{} with a vanilla greedy  algorithm (Algorithm~\ref{alg:greedy_incen_search}).
Despite Algorithm~\ref{alg:greedy_incen_search}  also induces a monotone mechanism (and thus truthful), it does not admit any constant-factor approximation guarantee, hence is less theoretically exciting compared to \model{}.
A comprehensive analysis regrading this baseline method can be found in Appendix~\ref{proof:greedy_incen_search}.
As reported in Figure~\ref{fig:exp_1_regret_commu} and Figure ~\ref{fig:exp_1_incen_social}, \model{} achieved competitive sub-linear regret and communication cost compared to DisLinUCB~\citep{wang2020distributed}, with lower incentive and social costs compared to the baseline greedy method, validating our theoretical analysis.

\subsection{Impact on Misreporting}

\noindent\textbf{Micro-level Study.} 
In this experiment, we study how misreporting affects an individual, in terms of the client's regret, incentive and utility.
To do so, we randomly designate a client to keep misreporting throughout the entire time horizon while keeping the others being truthful, and compare the corresponding outcome for this client.
We take \emph{truth-report} as the benchmark and plot the individual's total regret, incentive, and utility on the same chart using the normalized score (the respective value divided by that under truth-report), along with the actual value on top of each bar.
As presented in Figure~\ref{fig:exp_2_micro_fedban} and Figure~\ref{fig:exp_2_micro_greedy}, both \model{} and the greedy method demonstrate the ability to prevent client from benefiting via misreporting. 
It is important that the incentive payment (i.e., critical value) for the client is subject to the incentive mechanism and independent of its claimed cost.
Therefore, though under-reporting may encourage the client to be selected by the incentive mechanism, its net utility essentially becomes negative.
Meanwhile, despite over-reporting cost only undertakes the risk of being ruled out and losing incentives, it is surprising that this behavior leads to a slightly higher utility under the vanilla greedy incentive mechanism.
We attribute this to the reduced participation cost incurred by the client --- the less it participates, the less it suffers. 

\begin{figure}[!t]
\vspace{-4mm}
\centering    
\subfigure[Regret]{\label{fig:exp_2_macro_regret}\includegraphics[width=0.245\textwidth]{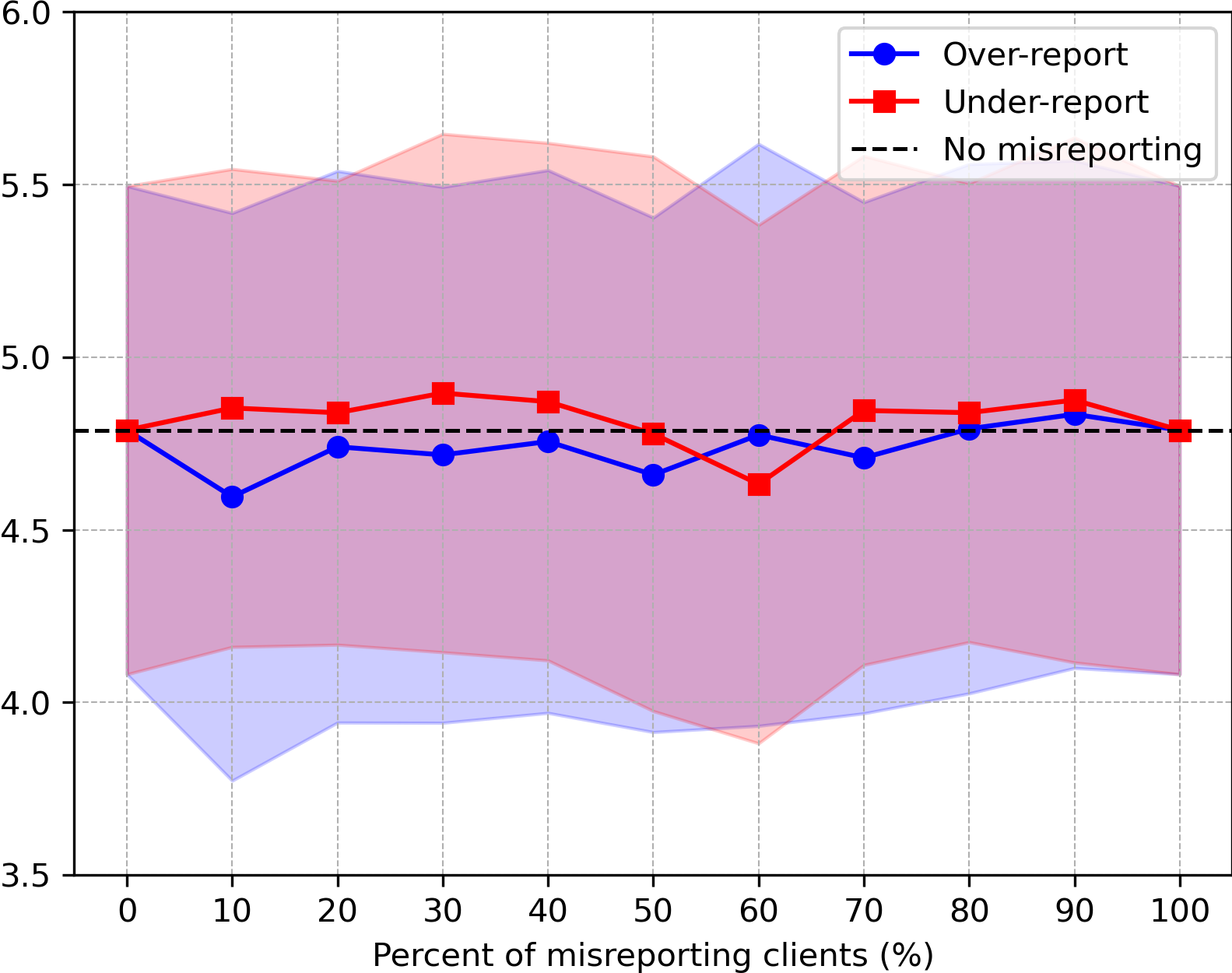}}
\subfigure[Communication Cost]{\label{fig:exp_2_macro_commu}\includegraphics[width=0.245\textwidth]{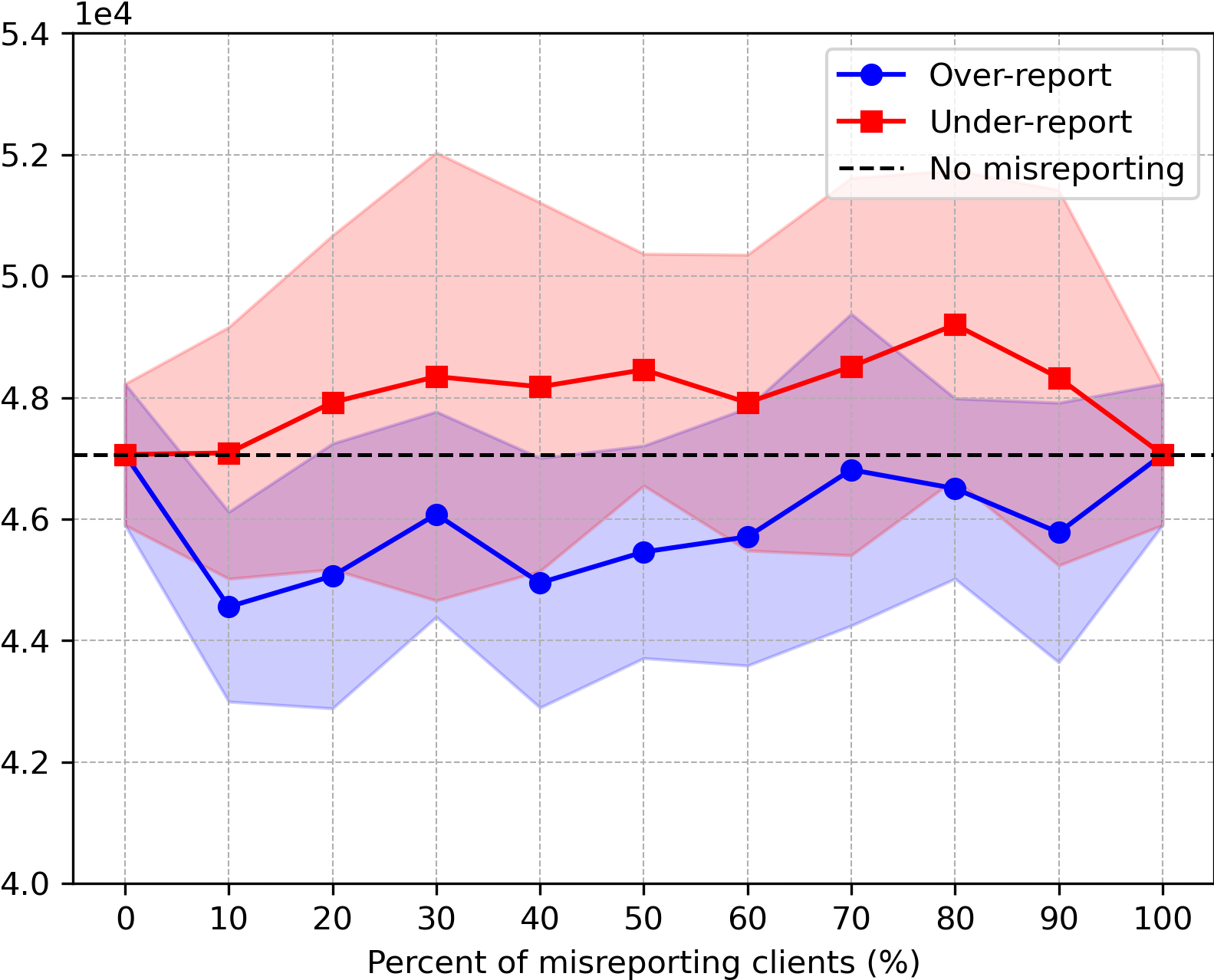}}
\subfigure[Incentive Cost]{\label{fig:exp_2_macro_incen}\includegraphics[width=0.245\textwidth]{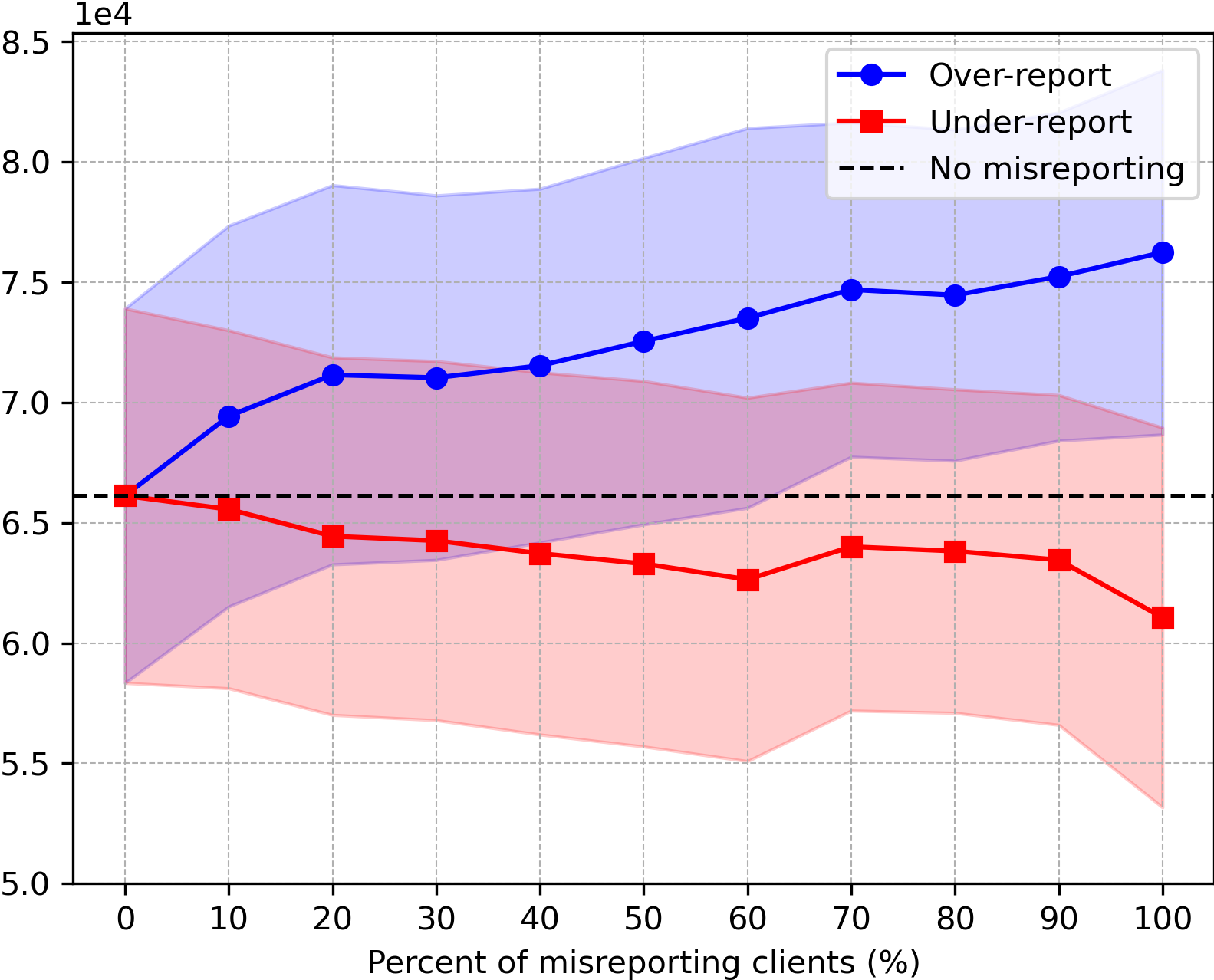}}
\subfigure[Social Cost]{\label{fig:exp_2_macro_social}\includegraphics[width=0.245\textwidth]{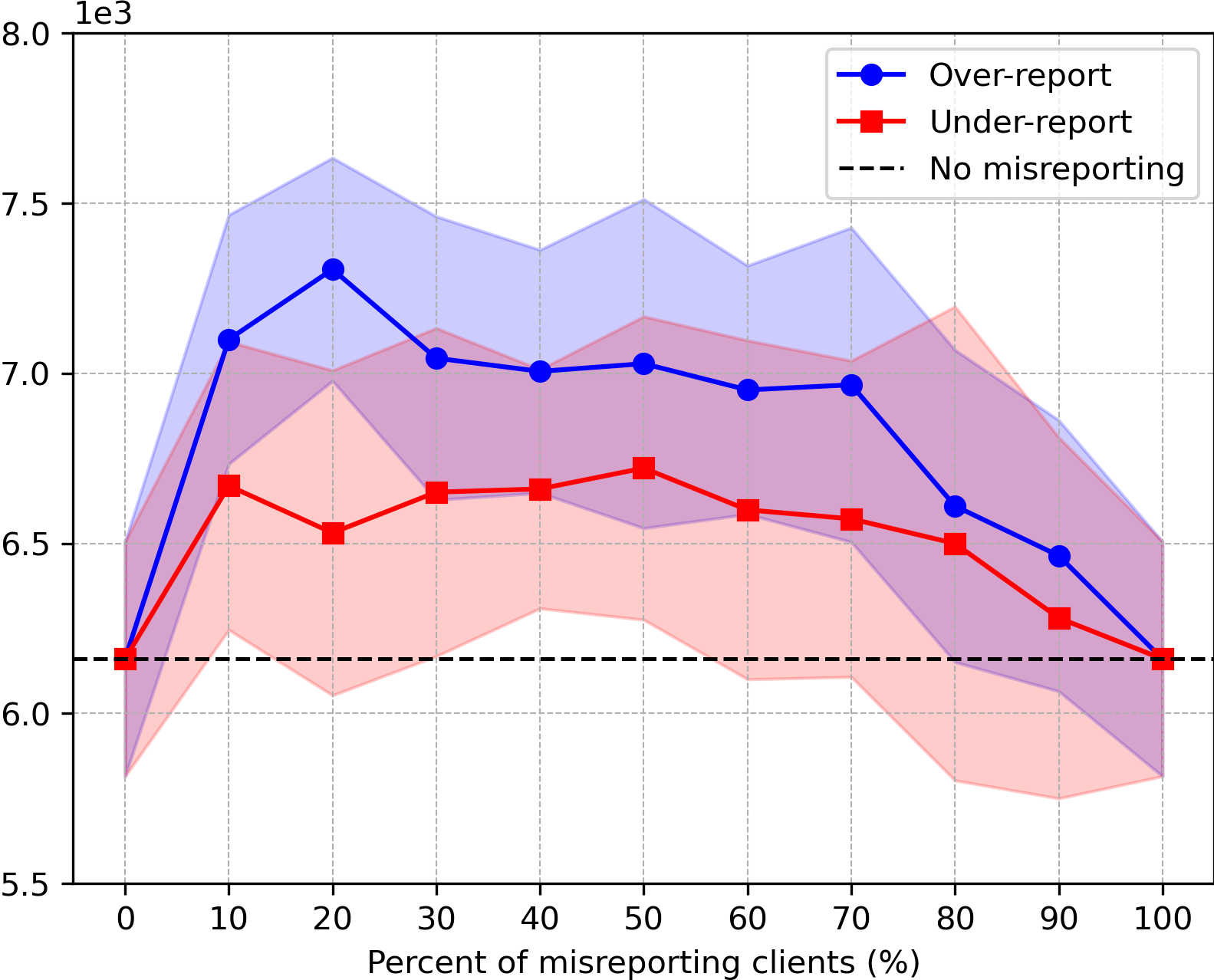}}
\vspace{-4mm}
\caption{Overall impact of misreporting.} \label{fig:exp_2_macro}
\vspace{-6mm}
\end{figure}

\noindent\textbf{Macro-level Study.} As presented in Figure~\ref{fig:exp_2_macro}, we also empirically investigate how different levels of misreporting across the set of clients affect the entire federated learning system. Specifically, we vary the number of misreporting clients from 0\% to 100\% to investigate the impact on overall system performance, including regret, communication cost, incentive cost, and social cost.
Generally, as guaranteed by our communication protocol, the overall regret under different degrees of misreporting remains virtually identical to the situation when no client misreports.
Meanwhile, the communication cost tends to increase when clients under report and decrease when they over report.
This aligns with our algorithm's design, which selects clients based on their value-to-cost ratio (Line 3 in Algorithm~\ref{alg:greedy}).
For example, when a client under reports, its ratio increases, which increases its chance to be selected in communication, hence leading to an increased communication cost.

An interesting finding that might seem contradictory to our discovery in the previous micro-level study is the overall impact of over reporting on incentive costs, which implies that the more clients over report, the higher the incentives they will receive.
But our finding in the micro-level study suggests that over reporting brings no benefit to the client's individual utility under \model{}.
We note that the observation in our macro-level study is due to \emph{collusion} among clients --- once a sufficient group of clients colludes, the server has to increase the critical value or even pay infinity. This actually rationalizes individual client's commitment to be truthful, as they are unaware of others clients' decision on truthfulness.
Meanwhile, this finding reveals the vulnerability of the incentivized truthful communication to collusion, leaving an interesting avenue for future work to explore.
On the other hand, it can be observed that both overreporting and underreporting hurt the social cost until the misreporting ratio reaches approximately 50\%.
This is interesting from the perspective of societal divisions --- when the society is equally divided into two parts, the social cost is at its largest. And as division decreases, the cost becomes lower.
For example, when the misreporting ratio reaches 100\%, meaning that everyone in the system is misreporting, the social cost resets the scenario where no one misreports, marking the establishment of a new stability in the system.
\section{Conclusion}
In this work, we introduce the first truthful incentivized communication protocol~\model{} for federated bandit learning, where a set of strategic and individual rational  clients are incentivized to truthfully report their cost to participate distributed learning. Our key contribution is to design a monotone client selection rule and its corresponding critical value based payment scheme. We establish the theoretical foundations for incentivized truthful communication, under which not only the social cost but also the regret and communication cost obtain their near-optimal performance. Numerical simulations verify our theoretical results, especially the truthfulness guarantee, i.e., individual clients' utility can only be maximized when reporting their true cost.

Our work opens a broad new direction for future exploration. First of all, our truthful incentivized communication protocol is not only limited to federated bandit learning, but can be applied to general distributed learning environments where self-interested clients need to be incentivized for collaborative learning.
Second, our truthful guarantee is proved for every round of communication, but it is unclear whether a client can do long-term planning to game the system.
For example, keep over reporting until it becomes monopoly, ultimately leading to an infinite incentive for its participation. 
Last but not least, although we no longer assume clients are truthful, we still assume they are not malicious, i.e., they only want to maximize their own utility. In practice, it is necessary to investigate the problem under an adversarial context, e.g., malicious clients intentionally misreport their costs to hurt other clients' utilities or system's learning outcome.  


\bibliography{references}
\bibliographystyle{iclr2024_conference}

\newpage
\appendix

\section{Proof of Monotonicity (Proposition~\ref{lem:monotone})} \label{proof:monotone}

Our proof of monotonicity relies on the submodularity property and the following lemma.
Note that our proof holds true for any time step $t$, and thus the subscript $t$ is omitted below to keep our notations simple.

\begin{definition}[Submodularity]\label{def:submodularity} 
A set function $g: 2^{S} \rightarrow \mathbb{R}$ is submodular, if for every $A \subseteq B \subseteq S$ and $i \in S \setminus B$ it holds that 
\begin{align*}
    g(A \cup \{i\}) - g(A) \geq  g(B \cup \{i\}) - g(B)
\end{align*}
\end{definition}

\begin{lemma}\label{lem:greedy_monotone}
Increasing the input budget of Algorithm~\ref{alg:greedy} always leads to a no worse output. Formally, denote the output of Algorithm~\ref{alg:greedy} as $S_b$ and $S_{b'}$ under different input budgets $b$ and $b'$. For any budget pair $b' > b$, we must have either $S_{b'} = S_b$ or $g(S_{b'}) > g(S_b)$.
\end{lemma}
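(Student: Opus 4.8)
The plan is to prove the claim by strong induction on the number of clients $N = |\widetilde S|$, peeling off the first element chosen by \textsc{Greedy} (Algorithm~\ref{alg:greedy}). Throughout I may assume $g(\emptyset) = 0$, since replacing $g$ by $g(\cdot) - g(\emptyset)$ changes neither the density ratios $\rho(j\mid S) := \frac{g(S\cup\{j\}) - g(S)}{\widehat D_{j}}$ that drive the greedy choices nor the sign of $g(S_{b'}) - g(S_b)$. I will use two structural facts about $g$: it is submodular (Definition~\ref{def:submodularity}), and it is monotone non-decreasing with strictly positive marginal gains on clients with $\Delta V_{j} \neq \vzero$ (adding $\Delta V_{j} \succeq \vzero$ strictly increases $\det V_{g}(S)$). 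The latter is exactly what upgrades a ``non-decreasing'' statement to the strict dichotomy ``either $S_{b'} = S_b$ or $g(S_{b'}) > g(S_b)$.''

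Fix $b' > b$, run \textsc{Greedy} under both budgets, and break $\argmax$ ties by a fixed rule so that the first picks $a_1$ (budget $b$) and $a_1'$ (budget $b'$) are well defined. The induction splits on whether they coincide. In the easy case $a_1 = a_1'$ I would contract: both runs commit to $a_1$ first, and their remaining behaviour is precisely \textsc{Greedy} applied to the ground set $\widetilde S \setminus \{a_1\}$ with the contracted function $g_{a_1}(T) = g(\{a_1\}\cup T)$ (still monotone submodular) under the reduced budgets $b - \widehat D_{a_1} < b' - \widehat D_{a_1}$. The induction hypothesis for $N-1$ clients then yields either identical contracted outputs --- whence $S_{b'} = S_b$ --- or a strict improvement in $g_{a_1}$, whence $g(S_{b'}) > g(S_b)$.

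The crux is the divergent case $a_1 \neq a_1'$, which is precisely where the sub-optimal nature of \textsc{Greedy} could in principle cause degeneration. Here I would first argue that the larger budget must have \emph{unlocked} $a_1'$: since $a_1$ maximizes density over $\{j : \widehat D_j < b\}$ while $a_1' \neq a_1$ maximizes it over the superset $\{j : \widehat D_j < b'\}$, consistent tie-breaking forces $\widehat D_{a_1'} \geq b$ and $\rho(a_1' \mid \emptyset) \geq \rho(a_1 \mid \emptyset)$. I then bound the \emph{entire} budget-$b$ solution by its first density: writing $S_b = \{a_1,\dots,a_m\}$ with prefixes $A_k$, submodularity gives $\rho(a_k\mid A_{k-1}) \leq \rho(a_k\mid\emptyset) \leq \rho(a_1\mid\emptyset)$ (the last step because each $a_k$ is itself affordable under $b$), so that
\begin{align*}
g(S_b) = \sum_{k=1}^{m}\rho(a_k\mid A_{k-1})\,\widehat D_{a_k}
\;\leq\; \rho(a_1\mid\emptyset)\sum_{k=1}^{m}\widehat D_{a_k}
\;=\; \rho(a_1\mid\emptyset)\cdot\mathrm{cost}(S_b)
\;<\; \rho(a_1\mid\emptyset)\, b
\;\leq\; \rho(a_1'\mid\emptyset)\,\widehat D_{a_1'}
= g(\{a_1'\}).
\end{align*}
Since $a_1' \in S_{b'}$ and $g$ is monotone, $g(S_{b'}) \geq g(\{a_1'\}) > g(S_b)$; and because no element of $S_b$ costs as much as $b$ whereas $\widehat D_{a_1'}\ge b$, the two sets genuinely differ, consistent with the strict branch of the claim.

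Finally I would assemble the induction: the base case $N = 0$ gives $S_b = S_{b'} = \emptyset$, and the degenerate subcase in which budget $b$ can afford nothing (so $S_b = \emptyset$) is handled exactly as the divergent case using monotonicity of $g$. The main obstacle is the inequality chain above: one must certify that, even though \textsc{Greedy} is only approximate and the two runs may produce non-nested sets, the single high-density element newly affordable under $b'$ already carries more value than the whole budget-$b$ set --- and the density bound via submodularity is what makes this go through. I expect the only delicate bookkeeping to be the tie-breaking convention (needed so that $a_1 = a_1'$ is forced whenever $a_1'$ is affordable under $b$) and the positivity-of-marginal-gain caveat, which rules out a change of set with unchanged $g$.
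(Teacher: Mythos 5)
Your proposal is correct, and its central inequality coincides with the paper's, but the scaffolding around it is genuinely different. The paper does not induct: it compares the two greedy selection \emph{sequences} directly, locates the first index $\tau$ at which they diverge, shows that the diverging element $k_\tau$ chosen under the larger budget satisfies $\widehat{D}_{k_\tau} > \sum_{i\geq\tau}\widehat{D}_{j_i}$ (it was skipped under $b$ only for being unaffordable) while having at least as large a density relative to the common prefix (\eqref{eq:ratio_ineq}), deduces that $k_\tau$ alone out-values the entire remaining budget-$b$ tail (\eqref{eq:value_ineq}), and closes with a telescoping decomposition of $g(S_b)$ and $g(S_{b'})$ plus submodularity. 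Your induction-plus-contraction device ($g_{a_1}(T)=g(\{a_1\}\cup T)$, strong induction on $|\widetilde{S}|$) replaces that prefix bookkeeping: after contraction any divergence occurs at the very first step, and your chain $g(S_b) \le \rho(a_1\mid\emptyset)\,\mathrm{cost}(S_b) < \rho(a_1\mid\emptyset)\,b \le \rho(a_1'\mid\emptyset)\,\widehat{D}_{a_1'} = g(\{a_1'\}) \le g(S_{b'})$ is precisely the paper's divergence-point estimate specialized to an empty prefix, with monotonicity of $g$ doing the job that the paper's telescoping comparison does. What your route buys: the common-prefix bookkeeping disappears, and the tie-breaking convention becomes explicit --- the paper silently assumes that an affordable density-maximizer is selected identically under both budgets, which is exactly your consistency requirement, and it is what justifies that the divergent element must be unaffordable under the smaller budget. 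What the paper's route buys: no induction set-up, and no need to verify that contraction preserves the greedy dynamics, monotonicity, and submodularity (it does, but this must be said, as you do). Finally, both arguments share the same implicit assumption behind the strict branch of the dichotomy: marginal gains of selected clients must be strictly positive, since otherwise two distinct all-zero-value outputs would falsify ``$S_{b'}=S_b$ or $g(S_{b'})>g(S_b)$''; the paper uses this silently when asserting strictness in \eqref{eq:value_ineq}, whereas you flag it explicitly --- that caveat is a feature of the lemma itself, not a defect of your proof.
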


\noindent \emph{Proof of 
Lemma~\ref{lem:greedy_monotone}}. Considering two input budget $b$ and $b'$ and the corresponding outputs $S_b$ and $S_{b'}$ of Algorithm~\ref{alg:greedy}. In the following, we show that $g(S_{b'}) \geq g(S_b)$ if $b' > b$.
Without loss of generality, we denote $S_b = \{j_1, j_2, \cdots, j_n\} \in 2^{\widetilde{S}}$ and $S_{b'} = \{k_1, k_2, \cdots, k_m\} \in 2^{\widetilde{S}}$ as the corresponding output, where $\widetilde{S} = \{1, 2, \cdots, N\}$, $ 1 \leq n \leq N$, $1 \leq m \leq N$.

Under different budget, the selected client in each round can vary due to the changed constraint in Line 3 of Algorithm \ref{alg:greedy}.
For example, under budget $b$, a client with the largest ratio may not be selected because including it would cause the total cost to exceed $b$.
In contrast, under budget $b'$, it can be selected due to the increased budget.
Consequently, this can create different output sequences $S_b$ and $S_{b'}$. 

Let $\tau $ be the first time when the two sequences diverge, i.e, 
$j_i = k_i, \forall 1 \leq i < \tau$, and $j_\tau \neq k_\tau$.
If such a $\tau$ does not exist, the two sequences are precisely the same and we will have $S_{b'} =  S_b$.
In the remainder of this proof, we assume $\tau$ exists and show that $g(S_{b'}) > g(S_b)$ consequently.
Let $S_b^\tau$ and $S_{b'}^\tau$ be the set that contains the first $\tau$ elements in $S_b$ and $S_{b'}$, so we have $S_b^{\tau-1} = S_{b'}^{\tau-1}$.
According to the greedy strategy (Line 3 of Algorithm~\ref{alg:greedy}), it is clear that $\widehat{D}_{k_\tau} > \sum_{i = \tau}^{n} \widehat{D}_{j_i}$, meaning that the cost of client $k_\tau$ is even higher than the total costs of clients in $S_{b} \setminus S_{b}^{\tau-1}$, which is the reason that $k_\tau$ appears in the output $S_{b'}$ under a larger budget $b'$  but is not (thus is skipped) in the output $S_b$ under $b$.
Moreover, this also implies that client $k_\tau$ has a larger value-to-cost ratio than that of any client in $S_{b} \setminus S_{b}^{\tau-1}$ at the $\tau$-th round, formally 
\begin{align}\label{eq:ratio_ineq}
    \frac{g(S_{b'}^{\tau-1} \cup \{k_\tau\}) -g(S_{b'}^{\tau-1})}{\widehat{D}_{k_\tau}} & \geq \frac{g(S_{b}^{\tau-1} \cup \{j_i\}) -g(S_{b}^{\tau-1})}{\widehat{D}_{j_i}}, \forall i \in [\tau, n]
\end{align}
For clarity, we denote the value of client $k_\tau$ as $v(k_\tau | S_{b'}^{\tau-1}) = g(S_{b'}^{\tau-1} \cup \{k_\tau\}) -g(S_{b'}^{\tau-1})$, quantifying how much client $k_\tau$ can improve the objective function $g$ with respect to the set $S_{b'}^{\tau-1}$.
Then we have
\begin{align*}
\sum_{i = \tau}^{n} \frac{v(j_i | S_{b}^{\tau-1})}{\widehat{D}_{j_i}} \cdot \widehat{D}_{j_i} 
\leq 
\frac{v(k_\tau | S_{b'}^{\tau-1})}{\widehat{D}_{k_\tau}} \sum_{i = \tau}^{n}  \cdot \widehat{D}_{j_i}
<
\frac{v(k_\tau | S_{b'}^{\tau-1})}{\widehat{D}_{k_\tau}}   \cdot \widehat{D}_{k_\tau}  
= v(k_\tau | S_{b'}^{\tau-1})
\end{align*}
where the first inequality follows from ~\eqref{eq:ratio_ineq}, and the second one holds true because $\widehat{D}_{k_\tau} > \sum_{i = \tau}^{n} \widehat{D}_{j_i}$.
Therefore, we can derive that
\begin{align}\label{eq:value_ineq}
    v(k_\tau | S_{b'}^{\tau-1}) > \sum_{i = \tau}^{n} v(j_i | S_{b}^{\tau-1}) 
\end{align}
Now we are ready to compare the objective value of $S_{b}$ and $S_{b'}$, and show that $g(S_{b'}) > g(S_{b})$.
By simple decomposition, we can rewrite $g(S_{b})$ as follows
\begin{align*}
    g(S_b) &= g(\{j_1, j_2, \cdots, j_n\}) \\
& = g(\emptyset) + [g(\{j_1\}) - g({\emptyset})] + [g(\{j_1, j_2\}) - g({j_1})] + \cdots + [g(S_b) - g(S_b \setminus \{j_{n}\})] \\
& = g(\emptyset) + v(j_1 | S_{b}^{0}) + v(j_2 | S_{b}^{1}) + \cdots + v(j_n | S_{b}^{n-1}) \\
& = g(\emptyset) + \sum_{i = 1}^\tau v(j_i | S_{b}^{i-1}) + \sum_{p = \tau}^n v(j_p | S_{b}^{p-1})
\end{align*}
Likewise, we have 
\[
    g(S_{b'})  = g(\emptyset) + \sum_{i = 1}^\tau v(k_i | S_{b'}^{i-1}) + \sum_{p = \tau}^n v(k_p | S_{b'}^{p-1})
\]
Recall that $j_i = k_i, \forall i < \tau$, and thus we have $v(j_i | S_{b}^{i-1}) = v(k_i | S_{b'}^{i-1}), \forall i < \tau $.
Therefore,
\begin{align*}
    g(S_{b'}) - g(S_b) & = \sum_{i = \tau}^n v(k_i | S_{b'}^{i-1}) - \sum_{i = \tau}^n v(j_i | S_{b}^{i-1}) \\ 
& = v(k_\tau | S_{b'}^{\tau-1}) +  \sum_{i = \tau + 1}^n V(k_i | S_{b'}^{i-1}) - \sum_{i = \tau}^n v(j_i | S_{b}^{i-1})  \\
& > \sum_{i = \tau}^{n} v(j_i | S_{b}^{\tau-1}) - \sum_{i = \tau}^n v(j_i | S_{b}^{i-1}) +  \sum_{i = \tau + 1}^n v(k_i | S_{b'}^{i-1}) \\
& > \sum_{i = \tau}^{n} v(j_i | S_{b}^{\tau-1}) - v(j_i | S_{b}^{i-1}) \\
& > 0
\end{align*}
where the first inequality directly follows~\eqref{eq:value_ineq}, and the last step utilizes the submodularity property (see Definition~\ref{def:submodularity}) of the submodular function $g$, i.e., 
$v(j_i | S_{b}^{\tau-1}) > v(j_i | S_{b}^{i-1}), \forall i > \tau$.
This concludes the proof.  \hfill\BlackBox

Now we are ready to prove the monotonicity of Algorithm~\ref{alg:truth_incen_search} by contradiction.

\noindent \emph{Proof of 
Proposition~\ref{lem:monotone}}. An algorithm is monotone if a client $\alpha$ remains selected by the algorithm whenever its reported cost satisfies $\widehat{D'}_\alpha < \widehat{D}_\alpha$, provided it gets selected when reporting  $\widehat{D}_\alpha$.
Let $S = \{i_1, i_2, \cdots, i_n\}$ and $b$ be the resulting participant set and budget determined by Algorithm~\ref{alg:truth_incen_search} when client $\alpha$ reports $\widehat{D}_\alpha$. 
Without loss of generality, we set $\alpha = i_k$, where $1 \leq k \leq n$, and denote $S^k = \{i_1, i_2, \cdots, i_k\}$ as the set of clients selected before $\alpha$. According to the greedy selection strategy in Algorithm \ref{alg:greedy}, we have
\begin{align}\label{eq:fact_1}
\!\!\!\!\!\!\frac{g(S^{k-1} \cup \{\alpha\}) - g(S^{k-1})}{\widehat{D}_\alpha} >\frac{g(S^{k-1} \cup \{i\}) - g(S^{k-1})}{\widehat{D}_i}, \forall i \in \widetilde{S} \setminus S^k : \widehat{D}_i + \sum_{j \in S^{k-1}} \widehat{D}_j \leq b 
\end{align}
Denote $S'$ and $b'$ as the resulting participant set and budget determined by Algorithm~\ref{alg:truth_incen_search} when client $\alpha$ reports $\widehat{D'}_\alpha < \widehat{D}_\alpha$.
Since decreasing client $\alpha$'s claimed cost will increase the ratio in the left-hand side of \eqref{eq:fact_1}, it will remain selected (no later than the $k$-th round) when $b' \leq b$, otherwise the terminating participant set $S_{b'}$ is not sufficient. The algorithm only deviates from this when the following condition is true:
\begin{align*}
    \frac{g(S^{k-1} \cup \{\alpha\}) - g(S^{k-1})}{\widehat{D'}_\alpha} <\frac{g(S^{k-1} \cup \{i\}) - g(S^{k-1})}{\widehat{D}_i}, \exists i \in \widetilde{S} \setminus S^k : \widehat{D}_i + \sum_{j \in S^{k-1}} \widehat{D}_j \leq b' 
\end{align*}
According to~\eqref{eq:fact_1}, this is only possible when $b' > b$ because the increased budget allows additional candidate clients with both larger value and cost, potentially surpassing the largest affordable ratio under $b$.
However, it contradicts the fact that any feasible terminating budget must be at most $b$ --- as Lemma~\ref{lem:greedy_monotone} guarantees that a larger budget input to Algorithm~\ref{alg:greedy} must always result in either exactly the same set or a different set with strictly higher objective value.
Meanwhile, the terminating condition (Line 3 of Algorithm~\ref{alg:truth_incen_search}) ensures that the entire search process will promptly terminate once it finds the minimum budget that satisfies the constraint.
Therefore, given budget $b$ already satisfies the constraint, it is impossible for the algorithm to terminate with a solution that has a higher budget than $b$, which finishes the proof. \hfill\BlackBox

\section{Greedy Incentive Search} \label{proof:greedy_incen_search}
In contrast to Algorithm~\ref{alg:truth_incen_search},  one straightforward alternative is to adopt the vanilla greedy method to solve the problem in~\eqref{eq:opti_problem_submod}, as presented in Algorithm~\ref{alg:greedy_incen_search}. The idea is to iteratively rank all non-selected clients according to their individual value-to-cost ratio and choose the one with the largest ratio (Line~3-4), until the resulting participant set satisfies the constraint (Line~2). 

  \begin{center}
  \begin{minipage}{0.47\textwidth}
    \begin{algorithm}[H]
      \caption{Vanilla Greedy Incentive Search}
      \begin{algorithmic}[1]
        \State $S_t \leftarrow \emptyset$, $\widetilde{S} = \{1, 2, \cdots, N\}$
        
        \While{$g_t(S_t) < \log \beta$}
          \State $i \leftarrow \arg\max_{j \in \widetilde{S} \setminus S_t} \frac{g_t(S_t \cup \{j\}) - g(S_t)}{\widehat{D}_{j,t}}$
          \State $S_t \leftarrow S_t \cup \{i\}$
        \EndWhile
        \State \textbf{return} $S$.
      \end{algorithmic}
      \label{alg:greedy_incen_search}
    \end{algorithm}
  \end{minipage}
  \end{center}

It is not difficult to verify this straightforward greedy algorithm is also monotonic, as decreasing a client's claimed cost essentially encourages its selection, thus making it a truthful mechanism. One notable difference between this greedy incentive search algorithm and our truthful incentive search (Algorithm~\ref{alg:truth_incen_search}) is that it does not compromise for the constraint.
As a result, as pointed out by previous studies~\citep{wolsey1982analysis}, this greedy algorithm does not admit any constant-factor approximation guarantee (i.e., it becomes problem instance specific), as shown in Lemma~\ref{lem:greedy_incen_search}.

\begin{lemma}[Theorem 2 of~\cite{wolsey1982analysis}]
\label{lem:greedy_incen_search}
Under parameter $\beta$ and clients' reported participation cost $\widehat{D}_t = \{\widehat{D}_{1,t}, \cdots, \widehat{D}_{N,t}\}$, Algorithm~\ref{alg:greedy_incen_search} is guaranteed to obtain a participant set $S_t$ such that 
  \begin{align*}
    \sum\limits_{i \in S} \widehat{D}_{i,t} \leq \left(1 + \ln \min\{\lambda_1, \lambda_2, \lambda_3\} \right)\sum\limits_{i \in S_t^\star} \widehat{D}_{i,t} \;\; \text{and}\;\; g_t(S_t) \geq \log \beta
\end{align*}
in which $\lambda_1 = \max\limits_{i, k}\{\frac{g_t(\{i\}) - g_t(\emptyset)}{g_t(S_t^{k} \cup \{i\}) - g_t(S_t^{k}) } \mid g_t(S_t^{k} \cup \{i\}) - g_t(S_t^{k}) > 0\}$ where the denominator is the smallest non-zero marginal gain from adding any element $i \in \widetilde{S}$ to the intermediate set $S_t^{k}$, i.e., the set contains the first $k$ elements of the output set $S_t$, and the numerator is the largest singleton value of $g$; $\lambda_2 = \frac{\sigma_1}{\sigma_K}$ where $K$ is the total number of iterations in the greedy search and $\sigma_k = \max\limits_{i} \frac{g_t(S_t^k \cup \{i\}) - g_t(S_t^k)}{\widehat{D}_{i,t}}$; $\lambda_3 = \frac{g(\widetilde{S})-g(\emptyset)}{g(\widetilde{S})-g(S_t^{K-1})}$.
\end{lemma}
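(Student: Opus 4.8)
The plan is to reproduce the classical greedy analysis for submodular set cover, specialized to the covering function $g_t$. Two claims must be established: feasibility, i.e.\ $g_t(S_t)\ge\log\beta$ at termination, and the cost bound. Feasibility is immediate --- the while loop (Line~2 of Algorithm~\ref{alg:greedy_incen_search}) exits exactly when the constraint holds, and it must terminate since $g_t(\widetilde{S})=0\ge\log\beta$ for $\beta\le 1$ while every iteration adds a client of strictly positive marginal gain until the target is met. So the substance is the cost bound, for which I would first record two structural facts: $g_t$ is monotone nondecreasing (adding a client injects a PSD update $\Delta V_{j,t}$, which only raises the log-determinant ratio) and submodular (Definition~\ref{def:submodularity}; log-determinant is submodular).

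Fix an optimal solution $S^\star=S_t^\star$ with cost $\mathrm{OPT}=\sum_{i\in S^\star}\widehat{D}_{i,t}$, and let $\emptyset=S^0\subset S^1\subset\cdots\subset S^K=S_t$ be the greedy chain, where the $(k{+}1)$-th step adds $i_{k+1}$ at cost $d_{k+1}:=\widehat{D}_{i_{k+1},t}$. Define the residual coverage $r_k:=\log\beta-g_t(S^k)$, so $r_0>0$ (assuming $\emptyset$ is infeasible; otherwise $S_t=\emptyset$ is trivially optimal) and $r_{K-1}>0\ge r_K$. The crux is a per-step inequality derived from submodularity and monotonicity:
\[
r_k=\log\beta-g_t(S^k)\le g_t(S^\star\cup S^k)-g_t(S^k)\le\sum_{j\in S^\star}\big(g_t(S^k\cup\{j\})-g_t(S^k)\big)\le\sigma_k\cdot\mathrm{OPT},
\]
where the first inequality uses $g_t(S^\star\cup S^k)\ge g_t(S^\star)\ge\log\beta$ (monotonicity and feasibility of $S^\star$), the second is the subadditivity of marginal gains (submodularity), and the last uses that greedy maximizes the value-to-cost ratio, so each marginal is at most $\sigma_k\widehat{D}_{j,t}$. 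Hence $\sigma_k\ge r_k/\mathrm{OPT}$, and since the chosen element realizes the ratio $\sigma_k$, its gain is $\sigma_k d_{k+1}$, giving the recursion $r_{k+1}=r_k-\sigma_k d_{k+1}\le r_k\big(1-d_{k+1}/\mathrm{OPT}\big)$.

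Iterating and applying $1-x\le e^{-x}$ yields $r_k\le r_0\exp\!\big(-\tfrac{1}{\mathrm{OPT}}\sum_{j\le k}d_j\big)$, so evaluating at $k=K-1$ (where $r_{K-1}>0$) gives $\sum_{j=1}^{K-1}d_j\le\mathrm{OPT}\,\ln(r_0/r_{K-1})$. Adding back the cost $d_K$ of the last, threshold-crossing element then produces a bound of the form $\big(1+\ln(\cdot)\big)\mathrm{OPT}$. The three quantities arise as three independent upper bounds for the combined logarithmic-ratio-plus-final-step term: $\lambda_3=\frac{g_t(\widetilde{S})-g_t(\emptyset)}{g_t(\widetilde{S})-g_t(S^{K-1})}$ bounds $r_0/r_{K-1}$ directly from the residual definition (using $g_t(\widetilde{S})=0$); $\lambda_1$ replaces the residual denominator by the smallest positive marginal gain ever encountered and the numerator by the largest singleton value $g_t(\{i\})-g_t(\emptyset)$; and $\lambda_2=\sigma_1/\sigma_K$ tracks the monotone decay of the greedy ratios themselves. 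Taking $\min\{\lambda_1,\lambda_2,\lambda_3\}$ is legitimate because each derivation independently upper-bounds the same greedy cost.

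I expect the main obstacle to be precisely this last-step bookkeeping: the clean exponential-decay recursion only controls the cost accrued \emph{strictly before} termination, so charging the final element --- which may overshoot the target --- and simultaneously massaging the residual ratio $\ln(r_0/r_{K-1})$ into each of the three concrete forms $\lambda_1,\lambda_2,\lambda_3$ requires the careful case analysis of Wolsey's original argument \cite{wolsey1982analysis}. The remainder is the routine submodular-cover machinery sketched above.
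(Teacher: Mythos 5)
The first thing to note is that the paper never actually proves this lemma: it is imported wholesale as Theorem~2 of \citet{wolsey1982analysis}, so the only benchmark to compare you against is Wolsey's own classical analysis, which is indeed the route you sketch. The parts you carry out are correct: termination follows because $g_t(\widetilde{S})=0\ge\log\beta$; the per-step inequality $r_k\le\sigma_k\cdot\mathrm{OPT}$ via monotonicity, feasibility of $S_t^\star$, and subadditivity of marginals is right; and the resulting decay $r_{k+1}\le r_k(1-d_{k+1}/\mathrm{OPT})$ correctly yields $\sum_{j=1}^{K-1}d_j\le\mathrm{OPT}\,\ln(r_0/r_{K-1})$.

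However, the steps you defer as ``bookkeeping'' are where the actual content lies, and the one concrete bridging claim you do make is backwards. (i) You assert that $\lambda_3$ bounds $r_0/r_{K-1}$ ``directly.'' It does not: writing $a=-g_t(\emptyset)$, $b=-g_t(S^{K-1})$, $c=-\log\beta$, monotonicity gives $a\ge b$, infeasibility of $S^{K-1}$ gives $b>c$, and $\beta<1$ gives $c>0$; then $r_0/r_{K-1}=(a-c)/(b-c)$ while $\lambda_3=a/b$, and $(a-c)b-a(b-c)=c(a-b)\ge 0$ shows $r_0/r_{K-1}\ge\lambda_3$, generally strictly. So your recursion yields a \emph{weaker} bound than $(1+\ln\lambda_3)\mathrm{OPT}$, and the stated constant is unreachable by this argument. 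The missing idea is Wolsey's truncation: run the entire analysis on $\bar{g}(S)=\min\{g_t(S),\log\beta\}$, whose maximum value \emph{is} the threshold; then the third constant (with $g_t(\widetilde{S})$ read as $\log\beta$) equals $r_0/r_{K-1}$ exactly. (ii) ``Adding back the cost $d_K$'' requires $d_K\le\mathrm{OPT}$, which you never establish, and which can genuinely fail for greedy run on un-truncated marginals: an element whose marginal gain vastly overshoots the residual can have the best value-to-cost ratio while costing far more than $\mathrm{OPT}$. Truncation again rescues this: capping marginals at the residual $r_{K-1}$ makes the chosen element's ratio at most $r_{K-1}/d_K$, while some element of $S_t^\star$ has truncated ratio at least $r_{K-1}/\mathrm{OPT}$, forcing $d_K\le\mathrm{OPT}$. (iii) The constants $\lambda_1$ and $\lambda_2$ are not derived at all. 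Since you explicitly defer (ii) and (iii) to ``the careful case analysis of Wolsey's original argument,'' your proposal as a standalone proof is incomplete and effectively reduces to the same citation the paper makes --- with the added problem that the $\lambda_3$ step, as written, goes in the wrong direction.
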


Alternatively, we can reformulate Algorithm~\ref{alg:greedy_incen_search} into an equivalent counterpart (Algorithm~\ref{alg:greedy_incen_search_v2}) that provides a bi-criteria approximation guarantee similar to Algorithm~\ref{alg:truth_incen_search}.
Note that these two variants essentially lead to the same outcome when parameterized with $\beta_1 = \beta^{(1-e^{-1})}$ and $\beta_2 = \beta$, where $\beta_1$ and $\beta_2$ are the specified hyper-parameters in Algorithm~\ref{alg:greedy_incen_search} and Algorithm~\ref{alg:greedy_incen_search_v2}, respectively.

\begin{figure}[ht]
  \centering
  \begin{minipage}{0.45\textwidth}
    \begin{algorithm}[H]
      \caption{Greedy Incentive Search (V2)}
      \begin{algorithmic}[1]
        \Require  $\beta$, $\widetilde{S} = \{1, 2, \ldots, N\}$
        \State $B \leftarrow$ \textsc{OrderedBudget}$(\widetilde{S})$
        \State $S_t \leftarrow \emptyset$, $b \leftarrow 0$, $k \leftarrow 0$
        
        \While{$g_t(S_t) < (1 - e^{-1}) \log \beta$}
          \State $b \leftarrow b + B[k]$
          \State $S \leftarrow$ \textsc{Greedy}$(\widetilde{S}, b)$
          \State $k \leftarrow k + 1$
        \EndWhile
        \State \textbf{return} $S_t$
      \end{algorithmic}
      \label{alg:greedy_incen_search_v2}
    \end{algorithm}
  \end{minipage}
  \hspace{0.02\textwidth}
  \begin{minipage}{0.48\textwidth}
    \begin{algorithm}[H]
      \caption{\textsc{OrderedBudget}}
      \begin{algorithmic}[1]
        \Require $\widetilde{S} = \{1, 2, \cdots, N\}$
        \State $S_t \leftarrow \emptyset$, $B \leftarrow \emptyset$
        
        \While{$\widetilde{S} \setminus S_t \neq \emptyset$}
          \State $u \leftarrow \argmax_{j \in \widetilde{S} \setminus S_t} \frac{g_t(S_t \cup \{j\}) - g_t(S_t)}{\widehat{D}_{j,t}}$
          \State $S_t \leftarrow S_t \cup \{u\}$
          \State $B \leftarrow B \cup \{\widehat{D}_{u,t}\}$
        \EndWhile
        \State \textbf{return} $B$
      \end{algorithmic}
      \label{alg:ordered_budget_search}
    \end{algorithm}
  \end{minipage}
  \label{fig:twoalg_v2}
\end{figure}

\begin{lemma}
\label{lem:greedy_incen_search_v2}
Under parameter $\beta$ and clients' reported participation cost $\widehat{D}_t = \{\widehat{D}_{1,t}, \cdots, \widehat{D}_{N,t}\}$, Algorithm~\ref{alg:greedy_incen_search_v2} provides a bi-criteria approximation such that
  \begin{align*}
    \sum\limits_{i \in S_t} \widehat{D}_{i,t} \leq \max \widehat{D}_t + \sum\limits_{i \in S_t^\star} \widehat{D}_{i,t} \;\; \text{and}\;\; g_t(S_t) \geq (1 - e^{-1}) \log \beta
\end{align*}
where $S_t$ is the output of Algorithm~\ref{alg:truth_incen_search}, and $S_t^\star$ is the ground-truth optimizer of problem defined in ~\eqref{eq:opti_problem}. 
\end{lemma}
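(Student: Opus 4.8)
The plan is to adapt the argument of Theorem~\ref{thm:bicriteria_approx} almost verbatim, exploiting the fact that Algorithm~\ref{alg:greedy_incen_search_v2} differs from Algorithm~\ref{alg:truth_incen_search} only in how the budget grows: instead of the multiplicative update $b \leftarrow (1+\epsilon)b$, the budget now increases \emph{additively} by the next entry $B[k]$ produced by \textsc{OrderedBudget}. The constraint half of the claim, $g_t(S_t) \geq (1-e^{-1})\log\beta$, is immediate from the terminating guard of the while loop (Line 3), so the entire content to be established is the objective bound.

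For the objective bound I would first fix notation, dropping the subscript $t$ as in the proof of Theorem~\ref{thm:bicriteria_approx}: let $b$ and $S_b$ be the terminating budget and output set, and let $b'$ and $S_{b'}$ be the budget and \textsc{Greedy}-output at the iteration immediately preceding termination. Since the loop increments $b$ by exactly one entry of $B$ per round, these satisfy $b = b' + B[K-1]$, where $B[K-1]$ is the final increment and $K$ the number of iterations. The key structural observation is that each entry of $B$ is a single client's reported cost by construction of \textsc{OrderedBudget}, so $B[K-1] \leq \max\widehat{D}_t$; this replaces the role of the factor $(1+\epsilon)$ in Theorem~\ref{thm:bicriteria_approx} and is what produces the additive slack in the statement.

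Next I would reproduce the feasibility-by-contradiction chain. Because the loop did not halt at the penultimate step, $g(S_{b'}) < (1-e^{-1})\log\beta$; combining this with Sviridenko's $(1-e^{-1})$ guarantee for \textsc{Greedy} on the knapsack-constrained subproblem at budget $b'$ (exactly as in \eqref{eq:greedy_appro}) yields $g(S^\star_{b'}) < \log\beta$, where $S^\star_{b'}$ is the exact optimizer of that subproblem. Assuming $\text{OPT} \leq b'$ would render the true optimizer $S_t^\star$ feasible for the subproblem, forcing $g(S^\star_{b'}) \geq g(S_t^\star) \geq \log\beta$, a contradiction; hence $\text{OPT} > b'$. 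Chaining the estimates then gives $\sum_{i \in S_b}\widehat{D}_i \leq b = b' + B[K-1] < \text{OPT} + \max\widehat{D}_t = \sum_{i \in S_t^\star}\widehat{D}_i + \max\widehat{D}_t$, which is the asserted inequality.

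The only genuinely new verification relative to Theorem~\ref{thm:bicriteria_approx} is controlling the additive gap $b - b'$, and I expect this to be the main (though mild) obstacle: I must argue that the penultimate budget $b'$ is well-defined and that the single increment separating it from $b$ is at most $\max\widehat{D}_t$, which follows directly from the fact that \textsc{OrderedBudget} appends one client's reported cost at a time. A secondary point worth pinning down is the timing of the loop guard — that $g_t(S_t)$ is evaluated \emph{after} each \textsc{Greedy} call — so that $g(S_{b'}) < (1-e^{-1})\log\beta$ legitimately holds at budget $b'$ and the approximation inequality can be invoked there rather than at $b$.
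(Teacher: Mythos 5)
Your proposal is correct and follows essentially the same route as the paper's own proof, which likewise reuses the argument of Theorem~\ref{thm:bicriteria_approx} with the multiplicative update $b=(1+\epsilon)b'$ replaced by the additive update $b=b'+B[k]$ and bounds the increment by $\max\widehat{D}_t$. You simply spell out the contradiction step and the bound on the final increment more explicitly than the paper does.
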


\emph{Proof of Lemma~\ref{lem:greedy_incen_search_v2}}.
The proof of this lemma largely repeats that of Lemma~\ref{thm:bicriteria_approx}, with a minor difference in~\eqref{eq:approx_ratio} (i.e., $b = (1+\epsilon)b' $ vs., $b = b' + B[k]$).
Unlike Algorithm~\ref{alg:truth_incen_search} slightly increasing the budget by a constant factor $(1+\epsilon)$, Algorithm~\ref{alg:greedy_incen_search_v2} increases the budget in a pre-ordered way based on the result of Algorithm~\ref{alg:ordered_budget_search}.
Similar to~\eqref{eq:approx_ratio}, the subscript $t$ is omitted, and we have
\begin{align}
   \sum\limits_{i \in S_{b}} \widehat{D}_i\leq b = b' + B[k] \leq  \max \widehat{D} + \sum\limits_{i \in S^\star} \widehat{D}_i  \label{eq:approx_ratio_v2}
\end{align}
Additionally, it is not difficult to see $g_t(S_t) \geq (1 - e^{-1}) \log \beta$ since this is the terminating condition of Algorithm~\ref{alg:greedy_incen_search_v2}. 
Combining both completes the proof. \hfill\BlackBox

\section{Technical Lemmas}

\begin{lemma}[Lemma 10 of~\citet{abbasi2011improved}]\label{lem:det_trace_ineq}
    Suppose $\bx_1, \bx_2, \cdots, \bx_t \in \mathbb{R}^d$ and for any $1 \leq s \leq t$, $\Vert \bx_s \Vert_2 \leq L$. Let $\overline{V}_t = \lambda I + \sum_{s=1}^t\bx_s\bx_s^\top$ for some $\lambda > 0$. Then, $$\det(\overline{V}_t) \leq (\lambda + tL^2/d)^d.$$
\end{lemma}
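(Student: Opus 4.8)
The plan is to reduce the determinant to a product of eigenvalues and then apply the AM--GM inequality to convert it into a bound involving the trace, which is far easier to control. Since $\overline{V}_t = \lambda I + \sum_{s=1}^t \bx_s\bx_s^\top$ is symmetric and positive definite (a positive multiple of the identity plus a sum of rank-one positive semidefinite matrices), it admits $d$ real positive eigenvalues $\mu_1, \ldots, \mu_d$. Its determinant equals the product of these eigenvalues and its trace equals their sum, so the whole argument hinges on relating a product to a sum.

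First I would write $\det(\overline{V}_t) = \prod_{i=1}^d \mu_i$ and invoke the AM--GM inequality on the nonnegative quantities $\mu_i$ to obtain
$$\det(\overline{V}_t) = \prod_{i=1}^d \mu_i \leq \left(\frac{1}{d}\sum_{i=1}^d \mu_i\right)^d = \left(\frac{\Tr(\overline{V}_t)}{d}\right)^d.$$
Next I would compute the trace explicitly, using its linearity together with the cyclic identity $\Tr(\bx_s\bx_s^\top) = \|\bx_s\|_2^2$:
$$\Tr(\overline{V}_t) = \lambda\,\Tr(I) + \sum_{s=1}^t \Tr(\bx_s\bx_s^\top) = \lambda d + \sum_{s=1}^t \|\bx_s\|_2^2.$$
The norm assumption $\|\bx_s\|_2 \leq L$ then gives $\Tr(\overline{V}_t) \leq \lambda d + tL^2$, and substituting this into the AM--GM bound yields $\det(\overline{V}_t) \leq \left(\frac{\lambda d + tL^2}{d}\right)^d = \left(\lambda + \frac{tL^2}{d}\right)^d$, which is exactly the claimed inequality.

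There is no substantial obstacle in this argument; it is a standard and essentially mechanical estimate. The only conceptual point worth flagging is the recognition that the gap between the product and the sum of the eigenvalues is precisely what AM--GM controls, with equality approached when all eigenvalues coincide, so the bound is tight in that regime. The remaining work is a routine trace computation combined with the uniform norm bound on the $\bx_s$.
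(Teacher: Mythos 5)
Your proof is correct and is exactly the standard determinant--trace argument used in the cited source (Lemma 10 of Abbasi-Yadkori et al.), which this paper invokes without reproving: eigenvalue decomposition, AM--GM to pass from determinant to trace, then the trace computation with the norm bound. Nothing to add.
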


\begin{lemma}[Lemma 11 of \cite{abbasi2011improved}]
\label{lem:det_ratio}      
 Let $\left\{X_{t}\right\}_{t=1}^{\infty}$ be a sequence in $\mathbb{R}^{d}$, $V$ is a $d \times d$ positive definite matrix and define ${V}_{t}=V+\sum_{s=1}^{t} X_{s} X_{s}^{\top}$. Then we have that $$\log \left(\frac{\operatorname{det}\left({V}_{n}\right)}{\operatorname{det}(V)}\right) \leq \sum_{t=1}^{n}\left\|X_{t}\right\|^{2}_{{V}_{t-1}^{-1}}.$$
Further, if $\left\|X_{t}\right\|_{2} \leq L$ for all $t$, then $$\sum_{t=1}^{n} \min \left\{1,\left\|X_{t}\right\|_{{V}_{t-1}^{-1}}^{2}\right\} \leq 2\left(\log \operatorname{det}\left({V}_{n}\right)-\log \operatorname{det} V\right) \leq 2\left(d \log \left(\left(\operatorname{trace}(V)+n L^{2}\right) / d\right)-\log \operatorname{det} V\right).$$
\end{lemma}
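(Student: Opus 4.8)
This is the classical self-normalized determinant inequality, and the plan is to prove both parts by exploiting the rank-one structure of the recursion $V_t = V_{t-1} + X_t X_t^\top$. The single identity that drives everything is a multiplicative formula for the determinant along the sequence. Factoring $V_t = V_{t-1}^{1/2}\bigl(I + V_{t-1}^{-1/2} X_t X_t^\top V_{t-1}^{-1/2}\bigr)V_{t-1}^{1/2}$ (legitimate since $V \succ 0$ and we only add positive semidefinite terms, so each $V_{t-1}$ is invertible) and applying the matrix determinant lemma to the rank-one perturbation $I + vv^\top$ with $v = V_{t-1}^{-1/2}X_t$, I obtain $\det(V_t) = \det(V_{t-1})\bigl(1 + \|X_t\|_{V_{t-1}^{-1}}^2\bigr)$, using $\|v\|_2^2 = X_t^\top V_{t-1}^{-1} X_t = \|X_t\|_{V_{t-1}^{-1}}^2$. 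Establishing this identity cleanly is really the only substantive step.

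For the first inequality I would telescope this identity from $t=1$ to $n$ to get $\det(V_n)/\det(V) = \prod_{t=1}^n \bigl(1 + \|X_t\|_{V_{t-1}^{-1}}^2\bigr)$, take logarithms, and apply the elementary bound $\log(1+x) \le x$ termwise, which immediately yields $\log(\det(V_n)/\det(V)) \le \sum_{t=1}^n \|X_t\|_{V_{t-1}^{-1}}^2$.

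For the second inequality I would first record the elementary estimate $\min\{1, y\} \le 2\log(1+y)$ valid for all $y \ge 0$: on $[0,1]$ one checks $y \le 2\log(1+y)$ by differentiating the difference, and for $y > 1$ it reduces to $1 \le 2\log 2$. Applying this with $y = \|X_t\|_{V_{t-1}^{-1}}^2$, summing, and invoking the telescoped identity once more gives $\sum_{t=1}^n \min\{1, \|X_t\|_{V_{t-1}^{-1}}^2\} \le 2\sum_{t=1}^n \log(1 + \|X_t\|_{V_{t-1}^{-1}}^2) = 2(\log\det V_n - \log\det V)$. The final trace bound then follows from the AM-GM inequality applied to the eigenvalues of the positive definite matrix $V_n$, giving $\det(V_n) \le (\operatorname{trace}(V_n)/d)^d$, combined with $\operatorname{trace}(V_n) = \operatorname{trace}(V) + \sum_{s=1}^n \|X_s\|_2^2 \le \operatorname{trace}(V) + nL^2$ under the assumption $\|X_s\|_2 \le L$.

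There is no genuine obstacle here, as the result is standard (it is exactly Lemma 11 of Abbasi-Yadkori et al.); the only points demanding a little care are verifying the rank-one determinant identity and the threshold case of the inequality $\min\{1,y\} \le 2\log(1+y)$, after which everything is routine.
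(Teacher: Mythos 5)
Your proof is correct, and it reproduces the standard argument: the rank-one determinant identity $\det(V_t)=\det(V_{t-1})\bigl(1+\|X_t\|_{V_{t-1}^{-1}}^2\bigr)$, telescoping with $\log(1+x)\le x$, the bound $\min\{1,y\}\le 2\log(1+y)$, and the AM--GM trace estimate. Note that the paper itself gives no proof of this statement --- it imports it verbatim as Lemma~11 of \citet{abbasi2011improved} --- and your argument is essentially the one found in that cited source, so there is nothing to reconcile.
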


\section{Proof of Lemma~\ref{lem:infinite_incentive}}\label{proof:infinite_incentive}

Our proof utilizes the following matrix determinant lemma~\citep{harville2008matrix}.

\begin{lemma}[Matrix Determinant Lemma]\label{lem:matrix_det}
Let $A \in \mathbb{R}^{n \times n}$ be an invertible n-by-n matrix, and $B,C \in \mathbb{R}^{n \times m}$ are n-by-m matrices, we have that
$$\det(A + BC^\top) = \det(A)\det(I_m +C^\top A^{-1} B)$$
\end{lemma}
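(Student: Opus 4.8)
The plan is to prove this classical identity by embedding both of its sides into a single augmented block matrix and evaluating that matrix's determinant in two different ways via Schur complements. Concretely, I would introduce the $(m+n)\times(m+n)$ matrix
\begin{align*}
M = \begin{pmatrix} I_m & C^\top \\ -B & A \end{pmatrix},
\end{align*}
whose blocks are dimensionally compatible because $C^\top \in \mathbb{R}^{m\times n}$ and $B \in \mathbb{R}^{n\times m}$. The single tool the argument relies on is the Schur-complement determinant formula: for a block matrix $\begin{pmatrix} P & Q \\ R & S\end{pmatrix}$ one has $\det = \det(P)\det(S-RP^{-1}Q)$ when $P$ is invertible, and $\det=\det(S)\det(P-QS^{-1}R)$ when $S$ is invertible. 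I would either cite this or derive it in one line from the block LDU factorization
\begin{align*}
\begin{pmatrix} P & Q \\ R & S \end{pmatrix} = \begin{pmatrix} I & 0 \\ RP^{-1} & I \end{pmatrix}\begin{pmatrix} P & 0 \\ 0 & S - RP^{-1}Q \end{pmatrix}\begin{pmatrix} I & P^{-1}Q \\ 0 & I \end{pmatrix}
\end{align*}
together with multiplicativity of the determinant and the fact that the two triangular factors have determinant $1$.

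Next I would expand $\det(M)$ in the two available directions. Taking the Schur complement of the top-left block $P=I_m$, which is trivially invertible, gives $\det(M)=\det(I_m)\det\!\big(A-(-B)I_m^{-1}C^\top\big)=\det(A+BC^\top)$. Taking instead the Schur complement of the bottom-right block $S=A$, which is invertible precisely by the lemma's hypothesis, gives $\det(M)=\det(A)\det\!\big(I_m-C^\top A^{-1}(-B)\big)=\det(A)\det(I_m+C^\top A^{-1}B)$. Since both expressions equal the determinant of the same matrix $M$, equating them yields the claimed identity $\det(A+BC^\top)=\det(A)\det(I_m+C^\top A^{-1}B)$.

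There is no genuine obstacle here, since the result is standard and the proof is short; the only points requiring care are bookkeeping ones. First, the sign on the off-diagonal block $-B$ must be chosen so that both expansions come out correctly: the $I_m$-expansion produces $A-(-B)C^\top=A+BC^\top$, and the $A$-expansion produces $I_m-C^\top A^{-1}(-B)=I_m+C^\top A^{-1}B$, so the two minus signs conspire into the intended plus signs. Second, the invertibility hypothesis on $A$ is used exactly once, namely to license the bottom-right Schur-complement expansion; the top-left expansion needs nothing beyond invertibility of $I_m$. As an alternative route I could instead factor $\det(A+BC^\top)=\det(A)\det(I_n+A^{-1}BC^\top)$ and invoke the Sylvester (Weinstein--Aronszajn) identity $\det(I_n+UV^\top)=\det(I_m+V^\top U)$ with $U=A^{-1}B$, $V=C$; but because Sylvester's identity is itself proved by the very same block-matrix device, the direct two-Schur-complement argument above is the most economical and self-contained.
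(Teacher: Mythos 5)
Your proof is correct, but there is nothing in the paper to compare it against: the paper does not prove this lemma at all. It states it as a known classical fact, cites it to \citet{harville2008matrix}, and uses it as a black box in the proof of Lemma~\ref{lem:infinite_incentive} (where it is applied with $A = \widetilde{V}_t - \Delta V_{i,t}$ and $B = C = X_n^\top$). Your argument --- evaluating $\det\!\begin{pmatrix} I_m & C^\top \\ -B & A \end{pmatrix}$ once via the Schur complement of the top-left block $I_m$ to get $\det(A+BC^\top)$, and once via the Schur complement of the bottom-right block $A$ to get $\det(A)\det(I_m + C^\top A^{-1}B)$ --- is the standard textbook derivation, and you execute it cleanly: the block dimensions are consistent, the sign on $-B$ is chosen so that both expansions produce the intended plus signs, and the invertibility hypothesis on $A$ is invoked exactly once, where it is genuinely needed. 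So your proposal upgrades a cited external result to a self-contained one; there is no gap and no discrepancy with the paper to reconcile.
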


\emph{Proof of Lemma~\ref{lem:infinite_incentive}}. It is known that the infinite critical value is unavoidable for a monopoly client under the truthful mechanism design.
To eliminate this issue, we first analyze the root cause of the existence of a monopoly client.
Denote $\widetilde{V}_{t}$ as the covariance matrix constructed by all sufficient statistics available in the system at time step $t$, and $\Delta V_{i,t} = X_n^\top X_n$, $X_n \in \mathbb{R}^{\Delta t \times d}$.
Specifically, client $i$ is a monopoly, i.e., being essential to satisfy the constraint in \eqref{eq:opti_problem_submod} at time step $t$, such that having all the other $N-1$ clients' data still cannot satisfy the constraint.
According to Lemma~\ref{lem:matrix_det}, plugging in $A = \widetilde{V}_t - \Delta V_{i,t}$ and $B=C=X_n^\top$, we have $$\frac{\det(\widetilde{V}_t - \Delta V_{i,t})}{\det(\widetilde{V}_t)} = \frac{1}{\det(I_{\Delta t} + X_n (\widetilde{V}_t - \Delta V_{i,t})^{-1}X_n^\top)}$$
where $\Delta V_{i,t} = X_n^\top X_n, X_n \in \mathbb{R}^{\Delta t \times d}$, $\Delta t$ represents the number of new data points in $\Delta V_{i,t}$.
Next, we show that there exists a lower bound of the ratio above, such that as long as we set the hyper-parameter $\beta$ less than the lower bound, it is guaranteed that no client can be essential.
Moreover, for a positive deﬁnite matrix $A \in \mathbb{R}^{d \times d}$, we have
 $A^{-1} \preccurlyeq \frac{I}{\lambda_{min}(A)}$ where $\lambda_{min}(A)$ denotes the minimum eigenvalue of $A$.
Plugging in $A = \widetilde{V}_t - \Delta V_{i,t}$, we have $ (\widetilde{V}_t - \Delta V_{i,t})^{-1} \preccurlyeq \frac{I}{\lambda_{min}(\widetilde{V}_t - \Delta V_{i,t})} \preccurlyeq \frac{I}{\lambda}$, where $\lambda>0$ is the regularization parameter defined in~\eqref{eq:arm_selection}.
It follows that
\begin{align*}
    \frac{1}{\det(I_{\Delta t} + X_n (\widetilde{V}_t - \Delta V_{i,t})^{-1}X_n^\top)}
&\geq \frac{1}{\det(I_{\Delta t} + \frac{1}{\lambda} X_n X_n^\top)}\\
    & = \frac{1}{\det(I_d + \frac{1}{\lambda}X_n^\top X_n)} = \frac{\lambda^d}{\det(\lambda I_d + \Delta V_{i,t})} \\
    & \geq \frac{\lambda^d}{\det(V_{i,t} + \lambda I_d)} \\
    & \geq \frac{\lambda^d}{(\lambda + tL^2/d)^d} = (1 + tL^2/\lambda d)^{-d}
\end{align*}
where the second step holds by elementary algebra, the third step utilizes the fact that $V_{i,t} \succcurlyeq \Delta V_{i,t}$, and the last step follows from Lemma~\ref{lem:det_trace_ineq}.
Therefore, as long as we set $\beta \leq (1+ tL^2/\lambda d)^{-d}$, it is guaranteed that no client will be essential at time step $t$. This finishes the proof.   \hfill\BlackBox

\section{Communication Cost and Regret Analysis}\label{proof:regret_and_commu}

As \model{} directly inherits from the basic protocol proposed in \citep{wei2023incentivized} with a truthful incentive mechanism, most part of the proof for communication cost and regret analysis (Theorem 4) in their paper extends to our problem setting. Therefore, with slight modifications, we can achieve the same sub-linear guarantee.

In essence, the only difference in terms of establishing the theoretical bounds for regret and communication cost between our method and~\citep{wei2023incentivized} lies in the relaxation of the constraint in~\eqref{eq:opti_problem_submod}, which deviated from the original constraint in~\eqref{eq:opti_problem} by a constant-factor gap of $(1-e^{-1})$.
Moreover, as we reformulate the determinant ratio constraint (i.e., $\frac{\det(V_{g,t}(S_{t}))}{\det(V_{g,t}(\widetilde{S}))} \geq \beta$) into a log determinant ratio constraint (i.e., $\log \frac{\det(V_{g,t}(S_{t}))}{\det(V_{g,t}(\widetilde{S}))} \geq (1-e^{-1})\log \beta$), the notion of $\beta$ in our work is slightly different from that in their work.
Specifically, denote the hyper-parameter in their method as $\overline{\beta}$, then any $\overline{\beta}$ used in their theoretical results can be replaced by our notation of $\beta$ via the transformation $\overline{\beta} = \beta^{1-e^{-1}}$.

In the following, we present the corresponding theoretical results of our proposed~\model{} and refer the readers to the proof details in Theorem 4 of~\citep{wei2023incentivized}.

\begin{lemma}[Communication Frequency Bound]\label{lem:epoch_bound}
    By setting the communication threshold $D_c = \frac{T}{N^2d\log T} -(1-e^{-1}) \sqrt{\frac{T^2}{N^2dR\log T} }\log \beta$, the total number of communication rounds is upper bounded by
    \[P = O(Nd \log T ) \]
    where $R = \left\lceil d\log ( 1 + \frac{T}{\lambda d}) \right\rceil = O(d\log T)$.
\end{lemma}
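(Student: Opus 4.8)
The plan is to recognize that, after the bi-criteria relaxation, our protocol is exactly the incentivized communication protocol analyzed in Theorem~4 of \citet{wei2023incentivized} run with a \emph{weaker} determinant threshold. Concretely, Theorem~\ref{thm:bicriteria_approx} only certifies $g_t(S_t)\ge(1-e^{-1})\log\beta$, i.e. $\det(V_{g,t}(S_t))/\det(V_{g,t}(\widetilde{S}))\ge \beta^{\,1-e^{-1}}$, so each synchronization recovers the global statistics up to $\overline\beta:=\beta^{\,1-e^{-1}}$ rather than up to $\beta$. Thus I would first argue that every occurrence of their hyper-parameter $\overline\beta$ may be replaced by $\beta^{\,1-e^{-1}}$ --- which is precisely why the stated threshold $D_c$ carries the factor $(1-e^{-1})\log\beta$ --- and then show that this constant-factor change leaves the order of $P$ intact. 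The engine underneath is a standard potential-function-plus-Cauchy--Schwarz count, which I would reproduce as follows.

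Let $t_1<\cdots<t_P$ be the communication rounds, set $\Delta_p=t_p-t_{p-1}$ (so $\sum_p\Delta_p\le T$), and let $a_p=\log\frac{\det(V_{g,t_p})}{\det(V_{g,t_{p-1}})}$ be the increment of the global log-determinant potential across round $p$. First, Lemma~\ref{lem:det_trace_ineq} (equivalently Lemma~\ref{lem:det_ratio}) telescopes these increments into a global budget, $\sum_p a_p\le \log\frac{\det(V_{g,T})}{\det(\lambda I)}\le R=O(d\log T)$. Second, I would show that the event triggering round $p$ forces the product of the elapsed time and the synchronized log-determinant ratio past the threshold; since that ratio is dominated by $a_p$ (the post-sync matrix contains the accumulated updates), this gives $\Delta_p\,a_p> D_c$ for every $p$, where the $(1-e^{-1})\log\beta$ term in $D_c$ is exactly the slack that absorbs the partial, $\overline\beta$-accurate reset.

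With these two facts the count is one line of Cauchy--Schwarz:
\[
P \;=\; \sum_{p} 1 \;<\; \frac{1}{\sqrt{D_c}}\sum_{p}\sqrt{\Delta_p\, a_p}\;\le\; \frac{1}{\sqrt{D_c}}\sqrt{\Big(\sum_p \Delta_p\Big)\Big(\sum_p a_p\Big)} \;\le\; \sqrt{\frac{R\,T}{D_c}}.
\]
It then remains to substitute the threshold. Using $R=O(d\log T)$ one has $\sqrt{\tfrac{T^2}{N^2 dR\log T}}=\tfrac{T}{Nd\log T}$, and over the admissible range of $\beta$ (namely $\beta^{\,1-e^{-1}}\ge e^{-1/N}$, so $(1-e^{-1})|\log\beta|\le 1/N$) both terms of $D_c$ are $\Theta\!\big(\tfrac{T}{N^2 d\log T}\big)$; hence $\tfrac{RT}{D_c}=O\!\big(N^2d^2(\log T)^2\big)$ and $P=O(Nd\log T)$, as claimed.

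The main obstacle is the second step: certifying $\Delta_p\,a_p> D_c$ under \emph{partial} participation. Because only the incentivized set $S_t$ uploads, the server resets to $V_{g,t}(S_t)$, which is merely $\beta^{\,1-e^{-1}}$-close to the fully aggregated matrix, so the naive per-round bound loses an additive term of order $(1-e^{-1})\Delta_p\log\beta$. Making this bookkeeping precise --- verifying that the correction built into $D_c$ exactly compensates this loss, so that the Cauchy--Schwarz step still sees a clean threshold of order $\tfrac{T}{N^2 d\log T}$ --- is the one place the argument genuinely departs from \citet{wei2023incentivized}, and where I would concentrate the rigor.
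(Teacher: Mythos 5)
Your top-level plan coincides with the paper's own treatment: the paper does not re-derive this lemma but obtains it from Theorem 4 of \citet{wei2023incentivized} via exactly the substitution $\overline{\beta} = \beta^{1-e^{-1}}$ you describe, justified by the bi-criteria guarantee of Theorem~\ref{thm:bicriteria_approx}. Your reconstruction of the underlying counting argument (telescoped log-determinant potential plus a per-epoch trigger inequality) is also the right skeleton, and you correctly locate the only delicate point, namely partial participation.

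However, the step you flag is a genuine gap, and your proposed resolution does not work as stated. Under partial synchronization the trigger only yields $\Delta_p\bigl(a_p + c_\beta\bigr) > D_c$ with $c_\beta := -(1-e^{-1})\log\beta \ge 0$, i.e.\ $\Delta_p a_p > D_c - \Delta_p c_\beta$. The correction built into $D_c$ is $\alpha c_\beta$ for the \emph{fixed} length scale $\alpha = \sqrt{T^2/(N^2 d R\log T)}$, whereas the loss $\Delta_p c_\beta$ grows with the epoch length and can dwarf it: for an epoch with $\Delta_p$ of order $T$, the right-hand side $D_c - \Delta_p c_\beta$ is negative and $\Delta_p a_p$ can be arbitrarily small. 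So no uniform ``clean threshold'' $\Delta_p a_p > D_c$ is recoverable, and the one-line Cauchy--Schwarz collapses. The standard repair is an epoch-length split at exactly the scale $\alpha$: epochs with $\Delta_p > \alpha$ number at most $T/\alpha = N\sqrt{dR\log T} = O(Nd\log T)$; epochs with $\Delta_p \le \alpha$ satisfy $a_p > (D_c - \alpha c_\beta)/\alpha = \frac{T}{N^2 d \log T}\cdot \frac{1}{\alpha}$, so the potential budget $\sum_p a_p \le R$ caps their number by $R\alpha \cdot \frac{N^2 d \log T}{T} = N\sqrt{dR\log T} = O(Nd\log T)$. (Alternatively, keep Cauchy--Schwarz with the corrected increments $a_p + c_\beta$, whose sum is only bounded by $R + P c_\beta$; this gives a quadratic inequality in $P$ that again resolves to $O(Nd\log T)$, using the two terms of $D_c$ to control the two resulting pieces separately.) A smaller correction: this bookkeeping is not where the argument departs from \citet{wei2023incentivized} --- their protocol already has partial participation and this split is precisely their argument; the only genuinely new ingredient in \model{} is the factor $1-e^{-1}$ in the exponent of the synchronization guarantee.
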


\paragraph{Communication Cost:}\label{proof:commu_cost_inc_feducb}
In each communication round, all clients first upload $O(d^2)$ scalars to the server and then download $O(d^2)$ scalars.
According to Lemma~\ref{lem:epoch_bound}, the total communication cost is $C_T = P \cdot O(Nd^2) = O(N^2d^3\log T)$.

\begin{lemma}[Instantaneous Regret Bound]\label{lem:single_regret_bound}
    Given parameter $\beta$, with probability $1-\delta$, the instantaneous pseudo-regret $r_{t}=\langle\theta^\star, \bx^\star-\bx_{t}\rangle$ in $j$-th communication round is bounded by
\[r_{t} = O\left(\sqrt{d\log \frac{T}{\delta}}\right) \cdot \Vert \mathbf{x}_{t} \Vert_{\widetilde{V}_{t-1}^{-1}} \cdot \sqrt{ \frac{1}{\beta^{(1 - e^{-1})}} \cdot  \frac{\det(V_{g, t_j})}{\det(V_{g, t_{j-1}})}}  \] 
\end{lemma}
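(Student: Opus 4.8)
The plan is to reduce the federated instantaneous regret to a single‑client LinUCB bound evaluated on the client's \emph{local} covariance, and then to pay for the ``staleness'' of those statistics by passing to the \emph{global} all‑data covariance $\widetilde V_{t-1}$ through a chain of determinant ratios that the incentivized communication constraint controls. First I would invoke the standard self‑normalized confidence bound (Abbasi‑Yadkori et al., as used in \citet{wang2020distributed,wei2023incentivized}) for the local ridge estimator $\hat\theta_{i,t-1}=V_{i,t-1}^{-1}b_{i,t-1}$: with probability $1-\delta$, uniformly over $t$, the truth $\theta^\star$ lies in the ellipsoid $\{\theta:\|\theta-\hat\theta_{i,t-1}\|_{V_{i,t-1}}\le\alpha_t\}$, where bounding the self‑normalizing log‑determinant term via Lemma~\ref{lem:det_trace_ineq} gives radius $\alpha_t=O(\sqrt{d\log(T/\delta)})$. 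Combined with optimistic arm selection this yields the usual one‑step bound $r_t\le 2\alpha_t\,\|\bx_t\|_{V_{i,t-1}^{-1}}$. The only subtlety is that the local estimator aggregates observations produced by several clients, so the concentration must be phrased against the common filtration of all interactions; this is precisely the reduction inherited from the base protocol.

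Next I would convert the local norm into the global one. Since $V_{i,t-1}=V_{g,t_{j-1}}+\Delta V_{i,t-1}$ is built only from the server's last aggregate plus client $i$'s own fresh data, its data is a sub‑collection of everything present in the system by time $t-1$, giving the ordering $V_{g,t_{j-1}}\preceq V_{i,t-1}\preceq\widetilde V_{t-1}$. The elementary inequality $\|\bx\|_{A^{-1}}^2\le\frac{\det B}{\det A}\,\|\bx\|_{B^{-1}}^2$ for $0\prec A\preceq B$ (which holds because the eigenvalues of $A^{-1}B$ are all at least one, so the largest is at most their product $\det B/\det A$), applied with $A=V_{i,t-1}$ and $B=\widetilde V_{t-1}$, gives
\[
\|\bx_t\|_{V_{i,t-1}^{-1}}\le\sqrt{\tfrac{\det(\widetilde V_{t-1})}{\det(V_{i,t-1})}}\,\|\bx_t\|_{\widetilde V_{t-1}^{-1}}.
\]

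It then remains to bound the determinant ratio by $\frac{1}{\beta^{(1-e^{-1})}}\cdot\frac{\det(V_{g,t_j})}{\det(V_{g,t_{j-1}})}$, and this is where the mechanism enters. For the denominator, $V_{i,t-1}\succeq V_{g,t_{j-1}}$ immediately gives $\det(V_{i,t-1})\ge\det(V_{g,t_{j-1}})$. For the numerator I would combine monotonicity of the aggregate, $\widetilde V_{t-1}\preceq\widetilde V_{t_j}$ (any step $t$ inside the $j$‑th round satisfies $t-1<t_j$), with the relaxed constraint guaranteed at the round endpoint $t_j$ by Theorem~\ref{thm:bicriteria_approx}, i.e.\ $\det(V_{g,t_j})\ge\beta^{(1-e^{-1})}\det(\widetilde V_{t_j})$; chaining these yields $\det(\widetilde V_{t-1})\le\beta^{-(1-e^{-1})}\det(V_{g,t_j})$. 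Substituting both estimates into the displayed inequality, together with $\alpha_t=O(\sqrt{d\log(T/\delta)})$, reproduces the claimed bound.

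The main obstacle I anticipate is bookkeeping rather than any single hard estimate: I must align the communication‑round indices so that the constraint verified \emph{at} the endpoint $t_j$ dominates $\widetilde V_{t-1}$ for \emph{every} step $t$ within round $j$, and I must keep straight that $V_{g,t_j}$ denotes the server's aggregate after the incentivized (not full) participation, so that the factor $\beta^{(1-e^{-1})}$ — rather than $\beta$ — correctly reflects the bi‑criteria relaxation of the determinant‑ratio constraint. Once these orderings are set, the remaining matrix and determinant manipulations are routine linear algebra.
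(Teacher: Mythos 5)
Your proposal is correct and follows essentially the same route as the paper's (inherited) argument: the paper defers this lemma to the analysis of Theorem~4 in \citet{wei2023incentivized} (itself built on \citet{wang2020distributed}), which is exactly your chain of (i) the self-normalized confidence ellipsoid giving $r_t \le 2\alpha_t\Vert\bx_t\Vert_{V_{i,t-1}^{-1}}$, (ii) the determinant-ratio norm conversion $\Vert\bx\Vert_{A^{-1}}^2 \le \frac{\det B}{\det A}\Vert\bx\Vert_{B^{-1}}^2$ for $A \preceq B$, and (iii) bounding $\frac{\det(\widetilde V_{t-1})}{\det(V_{i,t-1})}$ via $V_{i,t-1}\succeq V_{g,t_{j-1}}$, $\widetilde V_{t-1}\preceq\widetilde V_{t_j}$, and the termination guarantee $\det(V_{g,t_j})\ge\beta^{1-e^{-1}}\det(\widetilde V_{t_j})$ of Algorithm~\ref{alg:truth_incen_search}, which is precisely the substitution $\overline{\beta}=\beta^{1-e^{-1}}$ the appendix prescribes. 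Your handling of the round-index bookkeeping and of why the exponent $(1-e^{-1})$ appears is exactly the intended one, so the reconstruction is complete.
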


\emph{Proof of Theorem \ref{thm:regret_and_commu}}. We followed the notion of \emph{good epoch} and \emph{bad epoch} defined in \citep{wang2020distributed}. Combining with Lemma~\ref{lem:det_ratio}, we can bound the accumulative regret in the good epochs as,
$$    
REG_{good} = O\left ( \frac{d}{\sqrt{\beta^{1-e^{-1}}}}\cdot \sqrt{T} \cdot \sqrt{\log\frac{T}{\delta} \cdot logT}\right).
$$

Furthermore, we can show that the regret across all bad epochs satisfies,
$$
    REG_{bad} = O\left(Nd^{1.5}\sqrt{D_c \cdot \log\frac{T}{\delta}}\log T \right).
$$

Using the communication threshold $D_c = \frac{T}{N^2d\log T} -(1-e^{-1}) \sqrt{\frac{T^2}{N^2dR\log T} }\log \beta$ specified in Lemma~\ref{lem:epoch_bound}, we have
\begin{align*}
       R_T & = REG_{good} + REG_{bad} \\
       & =  O\left(\frac{d}{\sqrt{\beta^{1-e^{-1}}}}\sqrt{T}\log T \right) + O\left(Nd^{1.5}\log^{1.5} T \cdot \sqrt{ \frac{T}{N^2d\log T} + \frac{T}{Nd\log T} \log \frac{1}{\beta^{1-e^{-1}}}}\right) 
\end{align*}

Henceforth, by setting $\beta^{1-e^{-1}} > e^{-\frac{1}{N}}$, we can show that $\frac{T}{N^2d\log T} > \frac{T }{Nd\log T} \log \frac{1}{\beta^{1-e^{-1}}}$, and therefore
\begin{align*}
    R_T = O\left(\frac{d}{\sqrt{\beta^{1-e^{-1}}}}\sqrt{T}\log T \right) + O\left(d\sqrt{T}\log T\right) = O\left(d\sqrt{T}\log T\right)
\end{align*}
This concludes the proof.
\hfill\BlackBox

\section{General Framework for Incentivized Federated Bandits} \label{appendix:general_incentive_framework}

\begin{algorithm}[!ht]
    \caption{Incentivized Communication for Federated Linear Bandits}
    \label{alg:framework}
    \begin{algorithmic}[1]
        \Require $D_c \geq 0$, $\widehat{D}_t=\{\widehat{D}_{1,t}, \cdots, \widehat{D}_{N,t}\}$, $\sigma$, $\lambda > 0$, $\delta \in (0,1)$
        \State Initialize: {\bf[Server]} $V_{g,0} = \mathbf{0}_{d\times d}\in\mathbb{R}^{d\times d}$, $b_{g,0}=\mathbf{0}_{d}\in\mathbb{R}^{d}$ \\
        \hspace{7.5em} $\Delta V_{-j,0} = \mathbf{0}_{d\times d},\Delta b_{-j,0} =  \mathbf{0}_{d}$, $\forall j \in [N]$ \\
        \hspace{2.5em} {\bf [All clients]} $V_{i,0} = \mathbf{0}_{d\times d}$, $b_{i,0}=\mathbf{0}_{d}$, $\Delta V_{i, 0}=\mathbf{0}_{d\times d}$, $\Delta b_{i, 0}=\mathbf{0}_d$, $\Delta t_{i, 0}=0,  \forall i \in [N]$

        \For {$t=1, 2, \dots, T$}
            \State {\bf [Client $i_{t}$]} Observe arm set $\mathcal{A}_t$
            \State {\bf [Client $i_{t}$]} Select arm $\mathbf{x}_t\in\mathcal{A}_t$ by~\eqref{eq:arm_selection} and observe reward $y_t$
            \State {\bf [Client $i_{t}$]} Update: $V_{i_t, t} \mathrel{{+}{=}} \mathbf{x}_t\mathbf{x}^{\top}_t$, $b_{i_t, t} \mathrel{{+}{=}} \mathbf{x}_{t}y_t$ \\
            \hspace{9em} $\Delta V_{i_t, t}\mathrel{{+}{=}} \mathbf{x}_t\mathbf{x}^{\top}_t$, $\Delta b_{i_t, t}\mathrel{{+}{=}}\mathbf{x}_{t}y_t$, $\Delta t_{i_t,t}\mathrel{{+}{=}}1$
            \If {$\Delta t_{i_t, t}\log\frac{\det(V_{i_t,t}+\lambda I)}{\det(V_{i_t,t} -\Delta V_{i_t,t}+ \lambda I)} > D_c$}
            \State {\color{blue}\bf [All clients $\rightarrow$ Server]} Upload $\Delta V_{i,t}$, and let $\widetilde{S_t} = \{1, 2, \cdots, N\}$
            \State {\bf[Server]} Select incentivized participants $S_{t} = \mathcal{M}(\widetilde{S}_t | \widehat{D}_t)$ \Comment{Incentive Mechanism}
            \For {$i \in S_t$}
                \State {\color{orange}\bf [Participant $i \rightarrow$ Server]} Upload $\Delta b_{i,t}$ 
                \State {\bf [Server]} Update: $V_{g,t} \mathrel{{+}{=}} \Delta V_{i,t}$, $b_{g,t} \mathrel{{+}{=}} \Delta b_{i,t}$ \\
                \hspace{11.3em} $\Delta V_{-j,t} \mathrel{{+}{=}} \Delta V_{i,t}$, $\Delta b_{-j,t} \mathrel{{+}{=}} \Delta b_{i,t}, \forall j \neq i$
                \State {\bf[Participant $i$]} Update: $\Delta V_{i,t}=0$, $\Delta b_{i,t}=0$, $\Delta t_{i,t}=0$ 
            \EndFor
            \For {$\forall i \in [N]$}
                \State {\color{blue}\bf [Server $\rightarrow$ All Clients]} Download $\Delta V_{-i,t}$, $\Delta b_{-i,t}$
                \State {\bf[Client $i$]} Update: $V_{i,t}\mathrel{{+}{=}} \Delta V_{-i,t}$, $b_{i,t} \mathrel{{+}{=}}\Delta b_{-i,t}$
                \State {\bf [Server]} Update: $\Delta V_{-i,t} = 0$, $\Delta b_{-i,t} = 0$
            \EndFor
            \EndIf
        \EndFor
    \end{algorithmic}
\end{algorithm}
Algorihtm~\ref{alg:framework} shows the incentivized communication protocol proposed by~\cite{wei2023incentivized}.
The arm selection strategy for client $i_t$ as time step $t$ is based on the upper confidence bound method:
\begin{equation}\label{eq:arm_selection}
    \bx_{t}=\argmax_{\bx \in \cA_{t}}{\bx^{\top}\hat{\theta}_{i_{t},t-1}(\lambda)+ \alpha_{i_{t},t-1}||\bx||_{V^{-1}_{i_{t},t-1}(\lambda)}}
\end{equation}
where $\hat{\theta}_{i_{t},t-1}(\lambda)= V^{-1}_{i_{t},t-1}(\lambda)b_{i_{t},t-1}$ is the ridge regression estimator of $\theta_{\star}$ with regularization parameter $\lambda>0$, $V_{i_{t},t-1}(\lambda)=V_{i_{t},t-1}+\lambda I$, and 
$\alpha_{i_{t},t-1}=\sigma \sqrt{\log{\frac{\det({V_{i_{t},t-1}(\lambda) )}}{\det{(\lambda I)}}}+2\log{1/\delta}}+\sqrt{\lambda}$.
$V_{i_{t},t}(\lambda)$ denotes the covariance matrix constructed using the data available to client $i_{t}$ up to time $t$.

\section{Implementation Details}\label{sec:imple_detail}

\subsection{Hyper-parameter Settings}

As introduced in Section~\ref{sec_prelim}, the proposed~\model{} works with any realization of the valuation function. For demonstration purpose, we instantiate it as a combination of client's weighted data collection cost plus its intrinsic preference cost, i.e., $f(\Delta V_{i,t}) = w \cdot \det(\Delta V_{i,t}) + C_i$, where $w = 10^{-4}$, and each client $i$'s intrinsic preference cost $C_i$ is uniformly sampled from $U(0,100)$.
In the simulated environment (Section~\ref{sec:exp}), the time horizon is $T = 6250$, total number of clients $N = 25$, context dimension $d = 5$.
We set the hyper-parameter $\epsilon= 1.0$, $\beta = 0.5$ in Algorithm~\ref{alg:truth_incen_search} and Algorithm~\ref{alg:greedy_incen_search}.
The tolerance factor in Algorithm~\ref{alg:bisection_search} is  $\gamma = 1.0$.

As stated in Section~\ref{sec:truth_fedban}, we do not assume a monopoly-free environment and thus any truthful incentive mechanism has to pay essential clients infinite incentives to guarantee their participation when necessary.
Nonetheless, to visualize the impact of infinite payment, we simplify it as a constant value of $10^4$ that is orders of magnitude greater than the average participation cost. and the infinite critical value is simplified.

\subsection{Critical Value Calculation for Algorihtm~\ref{alg:greedy_incen_search}}

It is not difficult to show that Algorithm~\ref{alg:greedy_incen_search} is also monotone and thus inherently associated with a critical payment scheme to make the resulting mechanism truthful. We now elaborate on the critical value calculation method for it. And the critical value based payment scheme for Algorithm \ref{alg:truth_incen_search} can be derived in a similar spirit.

For each client $\alpha \in S$ in the participant set $S$ (subscript $t$ is omitted for simplicity), the critical value $c_\alpha$ is determined as follows. 
First, rerun  Algorithm~\ref{alg:greedy_incen_search} without client $\alpha$, i.e., setting $\widetilde{S}' = \widetilde{S} \setminus \{\alpha\}$; if the process fails to terminate with a feasible set $S'$, it suggests that client $\alpha$ is essential to satisfy the constraint, then its critical value is $c_\alpha = \infty$.
Otherwise, the process can terminate and return a feasible set, denoted as $S' = \{i_1, i_2, \cdots, i_K\}$, then the critical value $c_\alpha$ is calculated by
\begin{align}\label{eq:critical_value}
    c_\alpha = \max\limits_{k \in [K]} \widehat{D}_{i_k} \cdot \frac{g(S'_{k-1} \cup \{\alpha\}) -g(S'_{k-1})}{g(S'_{k-1} \cup \{i_k\}) -g(S'_{k-1})}
\end{align}
where $i_k$ and $S'_k$ represent the selected client and intermediate set of $S'$ at $k$-th round.
Denote $v(\alpha | S'_{k-1}) = g(S'_{k-1} \cup \{\alpha\}) -g(S'_{k-1})$, now suppose we are  placing client $\alpha$ at the $k$-th position of $S'$. To do so, the maximal participation cost that client $\alpha$ can claim should satisfy that the corresponding value-to-cost ratio is higher than that of client $i_k$, i.e., $v(\alpha | S'_{k-1}) / \widehat{D}_\alpha \geq v(i_k | S'_{k-1}) / \widehat{D}_{i_k}$.
In other words, the maximal cost client $\alpha$ can claim to replace $i_k$ is $\widehat{D}_\alpha = \widehat{D}_{i_k} \cdot v(\alpha | S'_{k-1}) / v(i_k | S'_{k-1})$. Therefore, the critical value $c_\alpha$ calculated in \eqref{eq:critical_value} ensures that as long as the client $\alpha$ claims slightly less than $c_\alpha$, it can replace at least one client in the $K$ rounds, thus becomes selected by the server. 
On the contrary, if $\widehat{D}_i$ is higher than $c_\alpha$, we can show that it will by no means get selected by the server.
Specifically, the condition $\hat{D}_{\alpha} > c_{\alpha}$ guarantees $\frac{g(S'_{k-1} \cup \{\alpha\}) -g(S'_{k-1})}{\hat{D}_{\alpha}} < \frac{g(S'_{k-1} \cup \{i_k\}) -g(S'_{k-1})}{\widehat{D}_{i_k}}, \forall k \in [K]$.
We can start from the selection of the first client $k=1$, and we want to guarantee client $\alpha$ will not be selected. The condition tells us $\frac{g(\alpha)}{\hat{D}_{\alpha}} < \frac{g(i_1)}{\widehat{D}_{i_1}}$, where $i_{1}$ denotes the client that was selected in the first place when we exclude $\alpha$. We know $\frac{g(i_1)}{\widehat{D}_{i_1}}$ is also higher than all the other clients, so algorithm will still select client $i_{1}$, i.e., $S_{1}=\{i_{1}\}=S_{1}^{\prime}$.
Then for $k=2$, the condition suggests $\frac{g(S_{1} \cup \{\alpha\}) -g(S_{1})}{\hat{D}_{\alpha}}=\frac{g(S'_{1} \cup \{\alpha\}) -g(S'_{1})}{\hat{D}_{\alpha}} < \frac{g(S'_{1} \cup \{i_2\}) -g(S'_{1})}{\widehat{D}_{i_2}}=\frac{g(S_{1} \cup \{i_2\}) -g(S_{1})}{\widehat{D}_{i_2}}$. Therefore, $\alpha$ will not be selected at $k=2$ either, and $S_{2}=S_{2}^{\prime}$. We can show client $\alpha$ will not be selected in $S$ by induction.

\subsection{Critical Value Calculation for Algorihtm~\ref{alg:truth_incen_search}}
In contrast, there is no explicit formula to calculate the critical value in~\model{}.
Following~\citet{mu2008truthful}, we calculate the critical value using bisection search as described in Algorithm~\ref{alg:bisection_search}.

\begin{algorithm}
\caption{Critical Value Calculation (Bisection Search)} \label{alg:bisection_search}
\begin{algorithmic}[1]
\Require  $\widetilde{S} = \{1, 2, \cdots, N\}$, $\widehat{D}_t = \{\widehat{D}_{1,t}, \widehat{D}_{2,t}, \cdots, \widehat{D}_{N,t}\}$, incentive mechanism $\mathcal{M}$, concerned client $i$, budget $b$, tolerance $\gamma$
\State Initialization: $L \gets 0$, $H \gets b$
    \While{$\frac{H - L}{2} \geq \gamma$}
        \State Calculate critical value: $c_i \gets \frac{L + H}{2}$
        \State Update $\widehat{D}: \widehat{D}_{i,t} \gets c_{i,t}$
        \State Run incentive mechanism: $S = \mathcal{M}(\widetilde{S}; \widehat{D})$\Comment{Algorithm~\ref{alg:truth_incen_search}}
        \If{$i \in S$}
            \State $L \gets c_{i,t}$
        \Else
            \State $H \gets c_{i,t}$
        \EndIf
    \EndWhile
\State \textbf{Return} client $i$'s critical value $c_{i,t}$
\end{algorithmic}
\end{algorithm}
The idea remains the same as stated above, to calculate the critical value of a particular client, we first rerun Algorithm~\ref{alg:truth_incen_search} without  it in the candidate client set.
If the client is essential, its critical value is $c_{i,t} = \infty$.
Otherwise, we can calculate the critical value via Algorithm~\ref{alg:bisection_search}.
Specifically, for any participant $i$ in the set $S_t$ found by Algorithm~\ref{alg:truth_incen_search}, it is clear that the bound of $i$'s critical value is its claimed cost $\widehat{D}_{i,t}$, otherwise it would not have been included in $S_t$.
Denote $b$ as the terminating budget determined by Algorithm~\ref{alg:truth_incen_search} when client $i$ is not considered, we can also have a upper bound for $c_{i,t} \leq b$.
With the lower and upper bound as input to Algorithm~\ref{alg:bisection_search}, it has been proven~\citep{burden2015numerical} that the number of iterations that  Algorithm~\ref{alg:bisection_search} needs to converge to a root to within a certain tolerance $\gamma$ is bounded by $\lceil \log_2(\frac{\gamma_0}{\gamma}) \rceil$, where $\gamma_0 = |b|$.

\section{Time Complexity Analysis of Algorithm~\ref{alg:truth_incen_search}} \label{appendix:time_complexity}

As the proposed Algorithm~\ref{alg:truth_incen_search} includes a subroutine process of Algorithm~\ref{alg:greedy}, thus we start the time complexity analysis with Algorithm~\ref{alg:greedy}.
Specifically, the worst-case time complexity of the while loop is $O(N)$. The operation inside the while loop involves finding the maximum element in a set, which takes $O(N)$ time. Therefore, the time complexity of Algorithm~\ref{alg:greedy} is $O(N^2)$.

Let $M$ be the number of iterations of the while loop (Line 3) in Algorithm~\ref{alg:truth_incen_search}.
Hence, the time complexity of Algorithm~\ref{alg:truth_incen_search} is $O(M \cdot N^2)$. Specifically, the worst case is to consistently increase the budget $b$ until it reaches $\sum_{i=1}^{N} \widehat D_{i,t}$.
Therefore, we can upper bound $M$ by considering the loop-breaking case: 
$b_0 \cdot (1 + \epsilon)^M \geq \sum_{i=1}^{N} \widehat D_{i,t}$, i.e., $M \leq \left\lceil \log_{1+\epsilon}\left(\sum_{i=1}^{N} \frac{\widehat D_{i,t}}{b_0}\right)\right \rceil$, where $b_0 = \min_{i \in  \widetilde S} \widehat{D}_{i,t}$.
As a result, Algorithm~\ref{alg:truth_incen_search} yields the following polynomial time complexity of $O( \left\lceil \log_{1+\epsilon}\left(\sum_{i=1}^{N} \frac{\widehat D_{i,t}}{b_0}\right)\right \rceil \cdot N^2)$.

\end{document}